
\documentclass{article}

\usepackage{microtype}
\usepackage{graphicx}
\usepackage{subfigure}
\usepackage{booktabs} 

\usepackage{hyperref}



\usepackage[accepted]{icml2023}

\usepackage{amsmath}
\usepackage{amssymb}
\usepackage{mathtools}
\usepackage{amsthm}

\usepackage[capitalize,noabbrev]{cleveref}

\theoremstyle{plain}
\newtheorem{theorem}{Theorem}[section]

\newtheorem{lemma}[theorem]{Lemma}
\newtheorem{corollary}[theorem]{Corollary}
\theoremstyle{definition}

\newtheorem{assumption}[theorem]{Assumption}
\theoremstyle{remark}

\usepackage[textsize=tiny]{todonotes}

\icmltitlerunning{Defects of Convolutional Decoder Networks in Frequency Representation}

\begin{document}

\twocolumn[
\icmltitle{Defects of Convolutional Decoder Networks in Frequency Representation}



\icmlsetsymbol{equal}{*}

\begin{icmlauthorlist}
\icmlauthor{Ling Tang}{equal,yyy}
\icmlauthor{Wen Shen}{equal,yyy}
\icmlauthor{Zhanpeng Zhou}{yyy}
\icmlauthor{Yuefeng Chen}{comp}
\icmlauthor{Quanshi Zhang}{yyy,cor}
\end{icmlauthorlist}

\icmlaffiliation{yyy}{Shanghai Jiao Tong University.}
\icmlaffiliation{comp}{Alibaba Group.}
\icmlaffiliation{cor}{Quanshi Zhang is the corresponding author. He is with the Department of Computer Science and Engineering, the John Hopcroft Center, at the Shanghai Jiao Tong University, China}

\icmlcorrespondingauthor{Quanshi Zhang}{zqs1022@sjtu.edu.cn}

\icmlkeywords{Machine Learning, ICML}

\vskip 0.3in
]



\printAffiliationsAndNotice{\icmlEqualContribution} 

\begin{abstract}
In this paper, we prove the representation defects of a cascaded convolutional decoder\footnote{Here, the decoder represents a typical network, whose feature map size is non-decreasing during the forward propagation.} network, considering the capacity of representing different frequency components of an input sample. 
We conduct the discrete Fourier transform on each channel of the feature map in an intermediate layer of the decoder network. Then, we extend the 2D circular convolution theorem to represent the forward and backward propagations through convolutional layers in the frequency domain. Based on this, we prove three defects in representing feature spectrums.
First, we prove that the convolution operation, the zero-padding operation, and a set of other settings all make a convolutional decoder network more likely to weaken high-frequency components. Second, we prove that the upsampling operation generates a feature spectrum, in which strong signals repetitively appear at certain frequencies. Third, we prove that if the frequency components in the input sample and frequency components in the target output for regression have a small shift, then the decoder usually cannot be effectively learned.
\end{abstract}

\section{Introduction}

In this study, we investigate the representation defect of a cascaded convolutional decoder\footnotemark[1] in generating features at different frequencies. That is, when we apply the discrete Fourier transform (DFT) to each channel of the feature map or the input sample, we try to prove which frequency components of each input channel are usually strengthened/weakened by the network.

\begin{figure*}
	\centering
	\includegraphics[width=\linewidth]{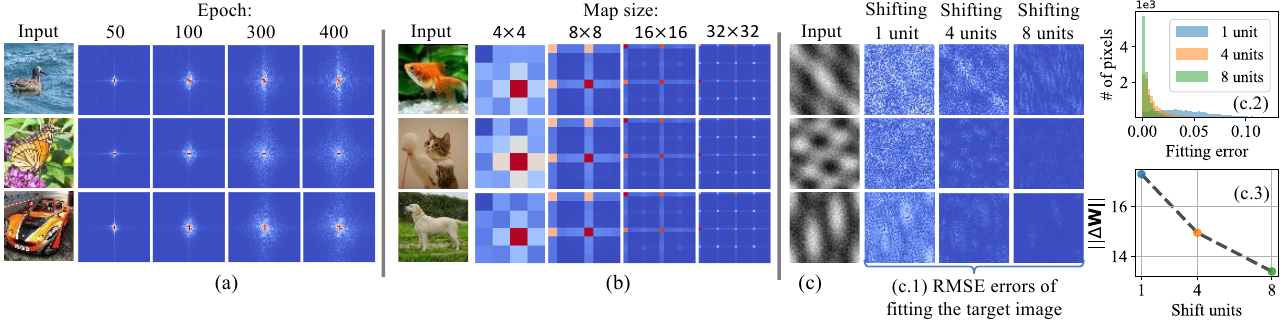}
	\vskip -0.1in
	\caption{Three representation problems with a cascaded convolutional decoder network. (a) The convolution operation and the zero-padding operation make the decoder usually learn low-frequency components first and then gradually learn higher frequencies. (b) For cascaded upconvolutional layers, the upsampling operation in the decoder repeats strong frequency components of the input to generate spectrums of upper layers. High-frequency components in magnitude maps in (b) are also weakened by the convolution operation after upsampling. We visualize the magnitude map of the feature spectrum, which is averaged over all channels. For clarity, we move low frequencies to the center of the spectrum map, and move high frequencies to the corners of the spectrum map. (c) The auto-encoder usually cannot be trained to fit the target output, whose specific frequency components have a small shift from the spectrum of the input image. We visualize the heatmap of fitting errors (c.1), \emph{i.e.}, the root mean squared error (RMSE), visualize the histogram of fitting errors over different pixels (c.2), and report the learning difficulty {\small$\lVert \Delta \textbf{W}\rVert$} (c.3). Here, results in (c.2) and (c.3) are averaged over different DNNs. Note that for magnitude maps in (a), we set the magnitude of the fundamental frequency to be the same with the magnitude of the second significant frequency.}
	\label{fig:bottleneck}
\end{figure*}

We extend the 2D circular convolution theorem to reformulate the forward propagation through multiple convolutional layers in the frequency domain. We find that both the forward propagation and the backward propagation in a convolutional network can be represented as the matrix multiplication on spectrums of the feature. Specifically, we mainly analyze a convolutional decoder, which only contains convolution operations without changing the size of feature maps. Then, based on the propagation in the frequency domain, we prove the following conclusions.

$\bullet$ \emph{Problem in representing high-frequency components.} We prove that both the convolution operation and the zero-padding operation make a cascaded convolutional decoder network more likely to weaken the high-frequency components of the input sample, if the convolution operation with a padding operation does not change the size of the feature map in a channel, as shown in Figure~\ref{fig:bottleneck}(a).
Besides, we also prove that the following three conditions further strengthen the above representation problem, including (1) a deep network architecture; (2) a small convolutional kernel size; and (3) a large absolute value of the mean value of convolutional weights.

$\bullet$ \emph{Problem in mistakenly repeating certain frequencies.} We find that the upsampling operation makes a cascaded convolutional decoder network generate a feature spectrum, in which strong signals repetitively appear at certain frequencies, as shown in Figure~\ref{fig:bottleneck}(b).

$\bullet$ \emph{Problem in fitting specific frequency components.} More crucially, we discover and prove that it is usually difficult to train an auto-encoder to fit the target image, if salient frequency components of the target output and those of the input have a small shift in the spectrum. Considering the continuous success of the auto-encoder in recent years, such a phenomenon is quite contrary to intuition.
As Figure~\ref{fig:bottleneck}(c) shows, a smaller shift between the input spectrum and the output spectrum usually leads to a higher difficulty in training the auto-encoder.

The above three problems just explain general trends towards generic problems of neural networks with convolution, zero-padding, and upsampling operations, instead of deriving a deterministic property of a specific network.

Although most conclusions are derived by ignoring ReLU operations in the decoder, we have conducted experiments, which have successfully verified such defects in different multi-layer decoder networks with ReLU layers. This proves the trustworthiness of our theorems. Note that we have not derived the property of max-pooling operations, so in this paper, it is difficult to extend such findings to neural networks for image classification.

\textbf{Discussions on two types of frequencies.}
People usually analyze feature representations of a network considering two types of frequencies. \cite{xu2019frequency,rahaman2019spectral} took the landscape of the loss function on all input samples as the time domain to analyze the frequency in the sample space. In comparison, we focus on the second type of frequency, \emph{i.e.}, we apply DFT to each channel of the intermediate-layer feature of a convolutional decoder and analyze defects in representing specific frequencies.

\section{Related work}

Although few previous studies directly prove a DNN's defects from the perspective of representing specific feature components, we still make a survey on research on the representation capacity of a DNN.

Some studies focused on a specific frequency that took the landscape of the loss function on all input samples as the time domain \citep{xu2019training,rahaman2019spectral,xu2019frequency,luo2019theory}. Based on such a specific frequency, they observed and proved a phenomenon namely Frequency Principle (F-Principle) that a DNN first quickly learned low-frequency components, and then relatively slowly learned the high-frequency ones. Ma \emph{et al.} \yrcite{ma2020machine} further explored the boundary of the F-Principle, beyond which the F-Principle did not hold anymore. Besides, Lin \emph{et al.} \yrcite{lin2019bandlimiting} empirically proposed to smooth out high-frequency components to improve the adversarial robustness. \textbf{In comparison, we focus on a fully different type of frequency}, \emph{i.e.}, the frequency \emph{w.r.t.} the DFT on an input image or a feature map.

In this direction, previous studies mainly experimentally analyzed the relationship between the learning of different frequencies and the robustness of a DNN.
Yin \emph{et al.} \yrcite{yin2019fourier} conducted a lot of experiments to analyze the robustness of a DNN \emph{w.r.t.} different frequencies of the image. They discovered that both adversarial training and Gaussian data augmentation improved the DNN's robustness to higher frequencies. Wang \emph{et al.} \yrcite{wang2020high} empirically proposed to remove high-frequency components of convolutional weights to improve the adversarial robustness. In comparison, we theoretically prove representation defects of DNNs in the frequency domain.

In fact, many studies explained the representation capacity of a DNN in the \textbf{time domain}. The information bottleneck hypothesis \citep{tishby2015deep,shwartz2017opening,wolchover2017new,amjad2019learning} showed that the learning process of DNNs was to retain the task-relevant input information and discarded the task-irrelevant input information. The lottery ticket hypothesis \citep{frankle2018lottery} showed that some initial parameters of DNNs inherently contributed more to the network output. The double-descent phenomenon \citep{2019Deep,2020Early} described the specific training process of DNNs that the loss first declined, then rose, and then declined again. DNNs with batch normalization were sometimes conflicted with the weight decay \citep{van2017l2, li2020understanding}.
DNNs were difficult to encode interactions between an intermediate number of input variables \citep{deng2021discovering}.

\section{Propagation in the frequency domain}\label{sec:dynamics}

\textbf{Preliminary 1, convolution operation.} Given a convolutional layer, let {\small$\textbf{W}^{[\textit{ker}=1]}$}, {\small$\textbf{W}^{[\textit{ker}=2]},\ldots$, $\textbf{W}^{[\textit{ker}=D]}$} denote {\small$D$} convolutional kernels of this layer, and let {\small$b^{[\textit{ker}=1]},b^{[\textit{ker}=2]},\ldots,b^{[\textit{ker}=D]}\in\mathbb{R}$} denote {\small$D$} bias terms.
Each {\small$d$}-th kernel {\small$\textbf{W}^{[\textit{ker}=d]}\in\mathbb{R}^{C\times  K\times K  }$} is of the kernel size {\small$K\times K$}, and {\small$C$} denotes the channel number. Accordingly, we apply these kernels on a feature {\small$\textbf{F}\in\mathbb{R}^{C \times M\times N }$} with {\small$C$} channels, and obtain the output feature {\small$\widetilde{\textbf{F}}\in\mathbb{R}^{D\times M'\times N'}$}, as follows.
\begin{small}
	\begin{equation}\label{eq:layerwise_conv}
		\begin{aligned}
			\widetilde{\textbf{F}} \!=\! \textit{Conv}(\textbf{F}), \ \ \textit{s.t.} \ \forall d,\ \widetilde{\textbf{F}}^{(d)} \!=\! \textbf{W}^{[\textit{ker}=d]}\otimes \textbf{F} + b^{[\textit{ker}=d]}\textbf{1}_{M'\times N'},
		\end{aligned}
	\end{equation}
\end{small}
where {\small$\widetilde{\textbf{F}}^{(d)}\in\mathbb{R}^{M'\times N'}$} denotes the feature map of the {\small$d$}-th channel. {\small$\otimes$} denotes the convolution operation. {\small$\textbf{1}_{M'\times N'}\in\mathbb{R}^{M'\times N'}$} is an all-ones matrix.

\textbf{Preliminary 2, discrete Fourier transform.} Given the $c$-th channel of the feature {\small$\textbf{F}\in\mathbb{R}^{C\times M\times N}$}, \emph{i.e.}, {\small$F^{(c)}\in\mathbb{R}^{ M\times N}$}, we use the discrete Fourier transform (DFT) \citep{sundararajan2001discrete} to compute the frequency spectrum of this channel, which is termed {\small$G^{(c)}\in\mathbb{C}^{M\times N}$}, as follows. {\small$\mathbb{C}$} denotes the algebra of complex numbers. 
\begin{small}
	\begin{equation}
		\begin{aligned}
			\forall u,v, \quad G_{uv}^{(c)}=\sum\nolimits_{m=0}^{M-1}\sum\nolimits_{n=0}^{N-1} F_{mn}^{(c)} e^{-i(\frac{um}{M}+\frac{vn}{N})2\pi}.
		\end{aligned}
	\end{equation}
\end{small}
Each frequency component at the frequency {\small$[u,v]$} is represented as a complex number, \emph{i.e.}, {\small$ G_{uv}^{(c)}\in\mathbb{C}$}. Let {\small$\textbf{G}=[G^{(1)},\ldots,G^{(C)}]\in\mathbb{C}^{C\times M\times N}$} denote the tensor of frequency spectrums of the {\small$C$} channels of {\small$\textbf{F}$}. We take the {\small$C$}-dimensional vector at the frequency {\small$[u, v]$} of the tensor {\small$\textbf{G}$}, \emph{i.e.}, {\small$\textbf{g}^{(uv)}=[G^{(1)}_{uv},G^{(2)}_{uv},\ldots,G^{(C)}_{uv}]^{\top}\in\mathbb{C}^C$}, to represent the frequency component {\small$[u, v]$} of the feature {\small$\textbf{F}$}. Frequency components closed to {\small$[0,0],[0,N-1],[M-1,0]$}, or {\small$[M-1,N-1]$} represent low-frequency signals, whereas frequency components closed to {\small$[\frac{M}{2},\frac{N}{2}]$} represent high-frequency signals.

\subsection{Propagation in frequency}

In this section, we extend the 2D circular convolution theorem \cite{jain1989fundamentals} to represent the forward propagation and the back-propagation in a cascaded convolutional network.

\begin{assumption}\label{ass1}
	\emph{Let us follow the setting in the 2D circular convolution theorem \cite{jain1989fundamentals}, which adds the circular padding operation assumption \cite{jain1989fundamentals} to the convolution operation. The convolution operation is conducted with a circular padding and with a stride size of 1, so as to avoid the convolution changing the size of the feature map. The circular padding is used to extend the last row and the last column of the feature map in each channel.}
\end{assumption}

\begin{theorem}\label{th2_layerwise}
	(Proof in Appendix~\ref{app:sec:th2}).
	According to Assumption~\ref{ass1}, the output feature {\small$\widetilde{\textbf{F}}\in\mathbb{R}^{D\times M \times N}$} has the same size as the input feature. Let {\small$\textbf{H}=[H^{(1)},H^{(2)},\ldots,H^{(D)}]\in\mathbb{C}^{D\times M\times N}$} denote a tensor consisting of {\small$D$} spectrums corresponding to the {\small$D$} channels of {\small$\widetilde{\textbf{F}}$}. Then, {\small$\textbf{H}$} can be computed as follows.
	\begin{small}
		\begin{equation}\label{eq:layerwise_spectrum_2}
			\begin{aligned}
				\textbf{h}^{(uv)}\!=\! 
				T^{(uv)}\textbf{g}^{(uv)}\!+\! \delta_{uv}MN\textbf{b} \ \ \ \
				\text{s.t.}\  \delta_{uv}\!=\!\begin{cases}
					1, \!\!& u \!=\! v \!=\! 0 \\
					0, \!\!  & \text{otherwise}
				\end{cases}
			\end{aligned}
		\end{equation}
	\end{small}
	where {\small$\textbf{h}^{(uv)} =[H^{(1)}_{uv},H^{(2)}_{uv},\ldots,H^{(D)}_{uv}]^{\top}\in\mathbb{C}^D$} denotes a column at the frequency {\small$[u,v]$} in the tensor {\small$\textbf{H}$}; {\small$T^{(uv)}\in\mathbb{C}^{D \times C}$} is a matrix of complex numbers and is exclusively determined by convolutional kernels {\small$\textbf{W}^{[\textit{ker}=1]},$ $\textbf{W}^{[\textit{ker}=2]},\ldots$, $\textbf{W}^{[\textit{ker}=D]}$}, {\small$T^{(uv)}_{dc}=\sum\nolimits_{t=0}^{K-1}\sum\nolimits_{s=0}^{K-1}W_{cts}^{[\textit{ker}=d]}e^{i(\frac{ut}{M}+\frac{vs}{N})2\pi}$}; {\small$\textbf{b}=[b^{(1)},b^{(2)},\ldots,b^{(D)}]^{\top}\in\mathbb{R}^D$} denotes the vector of bias terms.
\end{theorem}

To simplify the further proof, we temporarily investigate the spectrum propagation of a network with $L$ cascaded convolutional layers, but does not contain activation functions. Let us first discuss the trustworthiness of such a simplification. We have conducted experiments to show that all our findings in all theorems can also well explain the properties of an ordinary cascaded convolutional network with ReLU layers. As shown in Figure~\ref{fig:corollary}, for a network with ReLU layers, although the value derived from our theory was not exactly the same as the real value, experimental results still verified the conclusions of our theory. More crucially, experiments in Figures~\ref{fig:bottleneck}, \ref{fig:corollary}, \ref{fig:remark2_3}, and \ref{fig:icml_exp} were all conducted on ReLU networks.

\begin{figure}
	\centering
	\includegraphics[width=\linewidth]{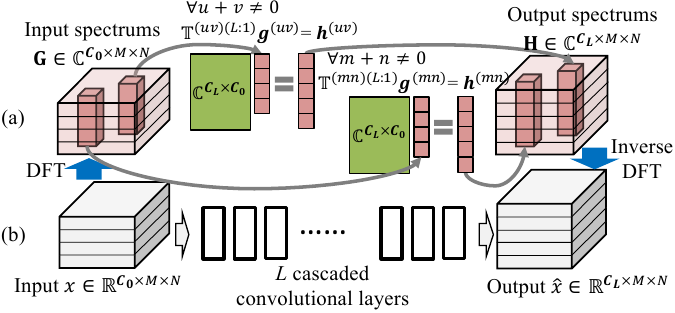}
	\vskip -0.05in
	\caption{Forward propagation in the frequency domain (a) and forward propagation in the time domain (b). The cascaded convolution operations on input {\small$x$} are essentially equivalent to matrix multiplication on spectrums {\small$\textbf{G}$} of the input.}
	\label{fig:convfrequency}
\end{figure}

\begin{figure*}[tbp]
	\centering
	\includegraphics[width=\linewidth]{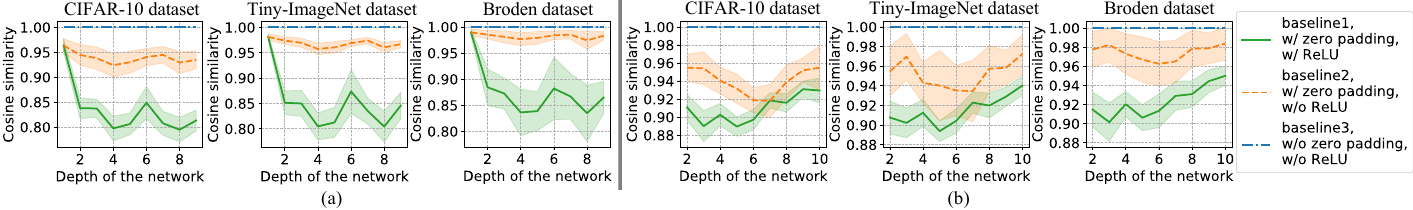}
	\vskip -0.05in
	\caption{(a) Fitness between the derived feature spectrums {\small$\textbf{H}$} in Corollary~\ref{co1_entire} and the real feature spectrums {\small$\textbf{H}^*$} measured in a real DNN. (b) Fitness between the derived change of {\small$T^{(l,uv)}$} in Corollary~\ref{co2_back} and the real {\small$T^{(l,uv)}$} measured in a real DNN. The shaded area represents the standard deviation.}
	\label{fig:corollary}
\end{figure*}

Let a convolutional network contain {\small$L$} cascaded convolutional layers. Each $l$-th layer contains {\small$C_l$} convolutional kernels, {\small$\textbf{W}^{(l)[\textit{ker}=1]},\textbf{W}^{(l)[\textit{ker}=2]},\ldots,\textbf{W}^{(l)[\textit{ker}=C_l]}\in\mathbb{R}^{C_{l-1} \times K\times K}$}, with {\small$C_l$} bias terms {\small$b^{(l,1)},b^{(l,2)},\ldots,b^{(l,C_l)}\in\mathbb{R}$}.
Let {\small$x\in\mathbb{R}^{C_0\times M\times N}$} denote the input sample. The network generates the output sample {\small$\widehat{x}=\text{net}(x)\in\mathbb{R}^{C_L\times M\times N}$}.
Then, we derive the forward propagation of spectrums of {\small$x$} to spectrums of {\small$\widehat{x}$} in the frequency domain, as follows.
\vskip 0.1in
\begin{corollary}\label{co1_entire}
	(Proof in Appendix~\ref{app:sec:co1})
	Let {\small$\textbf{G}=[ G^{(1)},G^{(2)},\ldots,G^{(C_0)}]\in\mathbb{C}^{C_0\times M\times N}$} denote frequency spectrums of the {\small$C_0$} channels of the input {\small$x$}. Then, based on Assumption~\ref{ass1}, spectrums of the image {\small$\widehat{x}$} generated by {\small$L$} cascaded convolutional layers, i.e., {\small$\textbf{H}=[H^{(1)},H^{(2)},\ldots,H^{(C_L)}]\in\mathbb{C}^{C_L\times M\times N}$}, are given as
	\begin{small}
		\begin{equation}\label{eq:forward}
			\textbf{h}^{(uv)}= 
			\boldsymbol{\mathbb{T}}^{(uv)(L:1)} \textbf{g}^{(uv)}+ \delta_{uv}\boldsymbol{\beta}
		\end{equation} 
	\end{small}
	where {\small$\textbf{g}^{(uv)}=[G_{uv}^{(1)},G_{uv}^{(2)},\ldots,G_{uv}^{(C_0)}]^{\top}\in\mathbb{C}^{C_0}$} and {\small $\textbf{h}^{(uv)}=[H_{uv}^{(1)},H_{uv}^{(2)},\ldots,H_{uv}^{(C_L)}]^{\top}\in\mathbb{C}^{C_L}$} denote vectors at the frequency {\small$[u,v]$} in tensors {\small$\textbf{G}$} and {\small$\textbf{H}$}, respectively. {\small$\boldsymbol{\mathbb{T}}^{(uv)(L:1)}=T^{(L,uv)}\cdots T^{(2,uv)}T^{(1,uv)}\in\mathbb{C}^{C_L \times C_0}$}.
	{\small$\boldsymbol{\beta}=MN\big(\textbf{b}^{(L)}+\sum_{j=2}^{L}\boldsymbol{\mathbb{T}}^{(00)(L:j)}\textbf{b}^{(j-1)} \big)\in\mathbb{C}^{C_L}$. $\textbf{b}^{(l)}=[b^{(l,1)},b^{(l,2)},\ldots,b^{(l,C_l)}]^{\top}\in\mathbb{R}^{C_l}$} denotes the vector of bias terms of {\small$C_l$} convolutional kernels in the $l$-th layer.
\end{corollary}

\emph{Understanding the cascaded convolution operations in the frequency domain.} Corollary~\ref{co1_entire} means that conducting multiple cascaded convolution operations on an input {\small$x$} is essentially equivalent to conducting matrix multiplication on spectrums of {\small$x$}. As Figure~\ref{fig:convfrequency} shows, for all frequencies except for the fundamental frequency, we have the output spectrum {\small$\textbf{h}^{(uv)}= 
	\boldsymbol{\mathbb{T}}^{(uv)(L:1)} \textbf{g}^{(uv)}$}.

Besides, the learning of parameters {\small$\textbf{W}^{(l)}$} affects the matrix {\small$T^{(l,uv)}$}. Therefore, we further reformulate the change of {\small$T^{(l,uv)}$} during the learning process, as follows.

\vskip 0.1in
\begin{corollary}\label{co2_back}
	(Proof in Appendix~\ref{app:sec:co2})
	Based on Assumption~\ref{ass1}, the change of each frequency component {\small$T^{(l,uv)}$} during the learning process is reformulated, as follows.
	\begin{small}
		\begin{equation}\label{eq:back1}
			\begin{aligned}
				(\Delta T^{(l,uv)})^{\top} &= - \eta MN 
				\sum\limits_{u'=0}^{M-1}\sum\limits_{v'=0}^{N-1}\chi_{u'v'uv} \Big( \overline{\boldsymbol{\mathbb{T}}}^{(u'v')(l-1:1)}   \overline{\textbf{g}}^{(u'v')}  \\ 
				&  \!\!\!\!\!\! + \delta_{u'v'}\overline{\boldsymbol{\beta}}' \Big) \frac{\partial \textit{Loss}}{\partial (\overline{\textbf{h}}^{(u'v')})^{\top}}  \overline{\boldsymbol{\mathbb{T}}}^{(u'v')(L:l+1)};
			\end{aligned}
		\end{equation}
	\end{small}
	where
	\small$\chi_{u'v'uv}=\frac{1}{MN}\frac{\sin(\frac{K(u-u')\pi}{M})}{\sin(\frac{(u-u')\pi}{M})} \frac{\sin(\frac{K(v-v')\pi}{N})}{\sin(\frac{(v-v')\pi}{N})} \cdot e^{i(\frac{(K-1)(u-u')}{M}+\frac{(K-1)(v-v')}{N})\pi} \in\mathbb{C}$ is a coefficient; {\small$\boldsymbol{\mathbb{T}}^{(u'v')(l-1:1)}=T^{(l-1,u'v')}\cdots T^{(2,u'v')} T^{(1,u'v')} \in\mathbb{C}^{C_{l-1} \times C_0}$; $\boldsymbol{\mathbb{T}}^{(u'v')(L:l+1)}=T^{(L,u'v')} \cdots T^{(l+1,u'v')} \in\mathbb{C}^{C_L \times C_l}$;
		$\boldsymbol{\beta}'=MN\big(  \textbf{b}^{(l-1)} +\sum_{j=2}^{l-1}\boldsymbol{\mathbb{T}}^{(00)(l-1:j)}\textbf{b}^{(j-1)} \big) \in\mathbb{C}^{C_{l-1}}$}; {\small$\overline{\textbf{g}}^{(u'v')}$} denotes the conjugate of {\small$\textbf{g}^{(u'v')}$}; {\small$\eta$} is the learning rate.		
\end{corollary}

\subsection{Experimental verification of Corollaries~\ref{co1_entire} and~\ref{co2_back}}

To verify the correctness of Corollary~\ref{co1_entire}, we computed the similarity between real spectrums {\small$\textbf{H}^*=[H^{*(1)},H^{*(2)},\cdots]$} measured by applying the DFT to the real network output, and spectrums {\small$\textbf{H}=[H^{(1)},H^{(2)},\cdots]$} derived in Corollary~\ref{co1_entire}. Specifically, we measured the cosine similarity {\small $\textit{sim}(\textbf{H}^*,\textbf{H}) = \mathbb{E}_c [\textbf{cos}(\textbf{vec}(\textit{mag}( H^{*(c)})),\textbf{vec}( \textit{mag}(H^{(c)} )))]$}, where {\small$\textbf{vec}(\cdot)$} represented the vectorization of a matrix, and {\small$\textit{mag}(\cdot)$} transferred a complex-valued matrix to a real-valued magnitude matrix\footnote{The function {\small$B=\textit{mag}(A)$} returns a matrix, where each element {\small$B_{ij}\in\mathbb{R}$} represents the magnitude of {\small$A_{ij}\in \mathbb{C}$}.}.

To this end, we constructed the following three baseline networks to verify whether Corollary~\ref{co1_entire} derived from specific assumptions could also objectively reflect real forward propagations in real neural networks. The first baseline network contained 10 convolutional layers. Each convolutional layer applied zero-paddings and was followed by an ReLU layer. Each convolutional layer contained 16 convolutional kernels (kernel size was {\small$3\times 3$}) with 16 bias terms. We set the stride size of the convolution operation to 1. The second baseline network was constructed by removing all ReLU layers from the first baseline network, which was closer to the assumption in Corollary~\ref{co1_entire}. The third baseline network was revised from the second baseline network by replacing all zero-paddings with circular paddings. The third baseline network followed the exact assumption in Corollary~\ref{co1_entire}.

Figure~\ref{fig:corollary}(a) reports {\small$\textit{sim}(\textbf{H}^*,\textbf{H})$} that was measured on spectrums in different layers and averaged over all samples. The similarity between real spectrums and derived spectrums was large for all the three baseline networks, which verified Corollary~\ref{co1_entire}. Note that the cosine similarity was computed based on high-dimensional vectors with as many as {\small$32^2$}, {\small$64^2$} or {\small$224^2$} dimensions (determined by the dataset). For such high-dimensional vectors, a similarity greater than 0.8 was already significant enough to verify the practicality of our theory\footnote{Please see Appendix~\ref{app:sec:dimension} for the curse of dimension.}.

Besides, in order to verify Corollary~\ref{co2_back}, we also measured the similarity between the real change of {\small$T^{(l,uv)}$} computed by measuring real network parameters, termed {\small$\Delta^* T^{(l,uv)}$}, and the change of {\small$T^{(l,uv)}$} derived with assumptions in Corollary~\ref{co2_back}, termed {\small$\Delta T^{(l,uv)}$}. The similarity was also computed\footnotemark[2] as {\small $\textit{sim}(\Delta^* T^{(l,uv)},\Delta T^{(l,uv)}) = \mathbb{E}_c [\textbf{cos}(\textbf{vec}(\textit{mag}(\Delta^* T^{(l,uv)})),\textbf{vec}(\textit{mag}(\Delta T^{(l,uv)})))]$}. The verification was also conducted on the above three baseline networks. Figure~\ref{fig:corollary}(b) reports {\small$\forall l,\textit{sim}(\Delta^* T^{(l,uv)},\Delta T^{(l,uv)})$} averaged over all samples. The similarity was greater than 0.88 for all three baseline networks, which was large\footnotemark[3] enough to verify Corollary~\ref{co2_back}.

\section{Representation problems}\label{sec:bottleneck}

In this section, we aim to prove three defects in the frequency representation with a cascaded convolutional decoder network. Note that unlike previous studies \cite{xu2019frequency,rahaman2019spectral} extracting frequent components in the sample space, we focus on a more commonly-used frequency, \emph{i.e.}, applying DFT to each channel of the intermediate-layer feature.

\subsection{Effects of the convolution operation} 

Given an initialized, cascaded, convolutional decoder\footnotemark[1] network with $L$ convolutional layers, let us focus on the behavior of the decoder network in the early epochs of training. 
We notice that each element in the matrix {\small$T^{(l,uv)}$}, \emph{i.e.}, {\small$T_{dc}^{(l,uv)}$}, is exclusively determined by the $c$-th channel of the $d$-th kenel {\small$W^{(l)[\textit{ker}=d]}_{c,0:K-1,0:K-1}\in\mathbb{R}^{K\times K}$}, according to Theorem~\ref{th2_layerwise}. Because parameters in {\small$W^{(l)}$} in the decoder network are initialized to random noises, we can consider that all elements in {\small$T^{(l,uv)}$} are irrelevant to each other, \emph{i.e.}, {\small$\forall d\ne d',c\ne c',T_{dc}^{(l,uv)}$} is irrelevant to {\small$T_{d'c'}^{(l,uv)}$}. Similarly, since different layers' parameters {\small$W^{(l)}$} are irrelevant to each other in the initialized decoder, we can consider that elements in different layers' {\small$T^{(l,uv)}$} are irrelevant to each other, \emph{i.e.}, $\forall l\ne l'$, elements in {\small$T^{(l,uv)}$} and elements in {\small$T^{(l',uv)}$} are irrelevant to each other. Moreover, since the early training of a DNN mainly modifies a few parameters according to the lottery ticket hypothesis \cite{frankle2018lottery}, we can still assume such irrelevant relationships in early epochs, as follows.

\vskip 0.1in
\begin{assumption}\label{ass:T}
	\emph{(Proof in Appendix~\ref{app:sec:th4})
		We assume that all elements in {\small$T^{(l,uv)}$} are irrelevant to each other, and {\small$\forall l\ne l'$}, elements in {\small$T^{(l,uv)}$} and {\small$T^{(l',uv)}$} are irrelevant to each other in early epochs.
		\begin{small}
			\begin{equation}
				\begin{aligned}
					\forall d&\ne d'; \forall c\ne c',\\ 
					&\mathbb{E}_{\textbf{W}^{(l)}}[T_{dc}^{(l,uv)}T_{d'c'}^{(l,uv)}] = \mathbb{E}_{\textbf{W}^{(l)}}[T_{dc}^{(l,uv)}]\mathbb{E}_{\textbf{W}^{(l)}}[T_{d'c'}^{(l,uv)}]
				\end{aligned}
			\end{equation}
		\end{small}
		\begin{small}
			\begin{equation}
				\begin{aligned}
					\forall l,d,c,d',c',& \ \
					\mathbb{E}_{\textbf{W}^{(l)},\ldots,\textbf{W}^{(1)}}[T_{dc}^{(l,uv)}\mathbb{T}_{d'c'}^{(uv)(l-1:1)}] = \\ &\mathbb{E}_{\textbf{W}^{(l)}}[T_{dc}^{(l,uv)})]\mathbb{E}_{\textbf{W}^{(l-1)},\ldots,\textbf{W}^{(1)}}[\mathbb{ T}_{d'c'}^{(uv)(l-1:1)}]
				\end{aligned}
			\end{equation}
		\end{small}
		Besides, according to experimental experience, the mean value of all parameters in {\small$\textbf{W}^{(l)}$} usually has a small bias during the training process, instead of being exactly zero. Therefore, let us assume that in early epochs, each parameter in {\small$\textbf{W}^{(l)}$} is sampled from a Gaussian distribution {\small$N(\mu_l,\sigma_l^2)$}.}
\end{assumption}

Note that we also experimentally verify that Assumption~\ref{ass:T} can be also applied to fully trained DNNs, besides DNNs trained after early epochs. Please see Appendix~\ref{app:sec:ass} for details.

According to {\small$\textbf{h}^{(uv)}=\boldsymbol{\mathbb{T}}^{(uv)(L:1)} \textbf{g}^{(uv)}+\delta_{uv}MN\textbf{b}$} in Corollary~\ref{co1_entire},
\textbf{we investigate the magnitude of {\small$\boldsymbol{\mathbb{T}}^{(uv)(L:1)}$} as an indicator to measure the strength of the network encoding this specific frequency component {\small$\textbf{g}^{(uv)}$}.}

\begin{theorem}\label{th:TTT}
	(Proof in Appendix~\ref{app:sec:th4})
	Let us focus on the simplest case that each convolutional layer only contains a feature map with a single channel, i.e., {\small$\forall l, C_l=1$}. Based on Assumption~\ref{ass:T}, {\small$\boldsymbol{\mathbb{T}}^{(uv)(L:1)}\in\mathbb{C}$} is computed as {\small$T^{(L,uv)}\cdots T^{(2,uv)}T^{(1,uv)}$}, which is the product of {\small$L$} complex numbers. Because each complex number {\small$T^{(l,uv)}$} follows a Gaussian distribution\footnote{The Gaussian distribution of complex numbers has three parameters {\small$\mu\in\mathbb{C},\sigma^2\in\mathbb{R}$} and {\small$r\in\mathbb{C}$}, which control the mean value, the variance, and the diversity of the phase of the sampled complex number, respectively.}, the mean value of {\small$\boldsymbol{\mathbb{T}}^{(uv)(L:1)}$} is {\small$\prod\nolimits_{l=1}^L \mu_lR_{uv}\in\mathbb{C}$}, where {\small$R_{uv} = \frac{\sin(\frac{uK\pi}{M})}{\sin(\frac{u\pi}{M})} \frac{\sin(\frac{vK\pi}{N})}{\sin(\frac{v\pi}{N})} e^{i(\frac{(K-1)u}{M}+\frac{(K-1)v}{N})\pi}\in\mathbb{C}$} is a complex coefficient; {\small$0 \le \lvert R_{uv} \rvert \le K^2$}. The logarithm of the second-order moment is given as {\small$\log\textit{SOM}(\boldsymbol{\mathbb{T}}^{(uv)(L:1)})=\sum\nolimits_{l=1}^L \log(\lvert\mu_lR_{uv}\rvert^2+K^2\sigma_l^2)\in\mathbb{R}$}.
\end{theorem}

Theorem~\ref{th:TTT} tells us the following five conclusions.

(1) The magnitude of {\small$\boldsymbol{\mathbb{T}}^{(uv)(L:1)}$}, which is measured using the second-order moment {\small$\textit{SOM}(\boldsymbol{\mathbb{T}}^{(uv)(L:1)})$}, increases along with the following four terms, including the absolute value of the expectation {\small$\lvert\mu_l\rvert$}, the magnitude of the complex coefficient {\small$\lvert R_{uv} \rvert$}, the kernel size {\small$K$}, and the variance {\small$\sigma_l^2$}.

(2) For each frequency component {\small$[u,v]$}, the magnitude of {\small$\boldsymbol{\mathbb{T}}^{(uv)(L:1)}$} will exponentially increase along with the depth $L$ of the network.
{\bf We can consider that each layer' {\small$T^{(l,uv)}$} has independent effects {\small$\log(\lvert\mu_lR_{uv}\rvert^2+K^2\sigma_l^2)$} on} {\small$\log\textit{SOM}(\boldsymbol{\mathbb{T}}^{(uv)(L:1)})=\sum\nolimits_{l=1}^L \log(\lvert\mu_lR_{uv}\rvert^2+K^2\sigma_l^2)$}.
We admit that such an conclusion is derived from the second-order moment of {\small$\boldsymbol{\mathbb{T}}^{(uv)(L:1)}$}, instead of a deterministic claim for a specific neural network. Nevertheless, according to the Law of Large Numbers, {\small$\textit{SOM}(\boldsymbol{\mathbb{T}}^{(uv)(L:1)})$} is still a convincing metric to reflect the average significance of {\small$\boldsymbol{\mathbb{T}}^{(uv)(L:1)}$}.

For the general case that each convolutional kernel contains more than one channel, \emph{i.e.}, {\small$\forall l,C_l>1$}, the magnitude of {\small$\boldsymbol{\mathbb{T}}^{(uv)(L:1)}$} also approximately exponentially increases along with the network depth with a quite complicated analytic solution. Please see Appendix~\ref{app:sec:th4} for the proof.

(3) The convolution operation makes a cascaded convolutional decoder network more likely to weaken the high-frequency
components of the input sample, if the convolution operation
does not change the feature map size. Especially, when the decoder network is deep, such a problem is more significant.
See Appendix~\ref{app_discussion_L} for more discussions.

(4)	If the expectation {\small$\mu_l$} of convolutional weights in each $l$-th layer has a large absolute value {\small$\lvert\mu_l\rvert$}, then the decoder network is less likely to learn high-frequency components. Please see Appendix~\ref{app_discussion_mu} for more discussions.

(5) If the convolutional kernel size $K$ is small, then the decoder network is less likely to learn high-frequency components. Please see Appendix~\ref{app_discussion_K} for more discussions.

Experiments in Section~\ref{sec:exp:b1} have verified the above conclusions in the general case that each convolutional layer contains more than one feature map.

\subsection{Effects of the zero-padding operation} 

To simplify the proof, let us consider the following one-side zero-padding.
Given each $c$-th channel {\small$F^{(c)}\in\mathbb{R}^{M\times N}$} of the feature map, the zero-padding puts zero values at the edge of {\small$F^{(c)}$}, so as to obtain a new feature {\small$\widetilde{F}^{(c)}\in\mathbb{R}^{M'\times N'}$}.
\begin{small}
	\begin{equation}
		\forall m,n, \ \ \widetilde{F}^{(c)}_{mn}=\begin{cases}
			F^{(c)}_{mn}, & 0\le m<M,\ 0\le n<N \\
			0, & M\le m<M',\ N\le n<N'
		\end{cases}
	\end{equation}
\end{small}
We have proven that the zero-padding operation boosts magnitudes of low-frequency components of feature spectrums of the feature map, as shown in Theorem~\ref{th:zero_padding}.

\vskip 0.1in
\begin{theorem}\label{th:zero_padding}
	(Proof in Appendix~\ref{app:sec:th5})
	Let each element in each $c$-th channel {\small$F^{(c)}$} of the feature map follows the Gaussian distribution {\small$\mathcal{N}(a, \sigma^2)$}. {\small$G^{(c)}\in\mathbb{C}^{M\times N}$} denotes the frequency spectrum of {\small$F^{(c)}$}, and {\small$H^{(c)}\in\mathbb{C}^{M'\times N'}$} denotes the frequency spectrum of the output feature {\small$\widetilde{F}^{(c)}$} after applying zero-padding on {\small$F^{(c)}$}. Then, the zero-padding on {\small$F^{(c)}$} boosts the second-order moment (SOM) of each frequency component at {\small$[u,v]$} as follows, whose strength is measured by averaging over different sampled features.
	
	\begin{small}
		\begin{equation}\label{eq:zero_padding}
			\begin{aligned}
				&\forall 0\le u<M,\ 0\le v<N, u+v\ne 0 \\ 
				& \qquad\qquad \textit{SOM}(H_{uv}^{(c)}) - \textit{SOM}(G_{uv}^{(c)}) =  a^2  \tau_{uv}^2  , \\
			\end{aligned}	 
		\end{equation}
	\end{small}
	where {\small$\textit{SOM}(H_{uv}^{(c)})=\mathbb{E}[H_{uv}^{(c)}\overline{H}_{uv}^{(c)}]$}  denotes the second-order moment of {\small$H_{uv}^{(c)}$};
	{\small$\tau_{uv} = \frac{\sin (\frac{Mu\pi}{M'})}{\sin(\frac{u\pi}{M'})}  \frac{\sin(\frac{Nv\pi}{N'})}{\sin(\frac{v\pi}{N'})}\in\mathbb{R}$}. Note that for the fundamental frequency {\small$u=v=0$}, {\small$\textit{SOM}(H_{00}^{(c)}) = \textit{SOM}(G_{00}^{(c)})$}.
\end{theorem}

\textbf{(Conclusion)} According to the rule of the forward propagation in Equation~(\ref{eq:forward}) and the change of {\small$T^{(l,uv)}$} in Equation (\ref{eq:back1}), the zero-padding operation boosts the SOM of low-frequency components, because {\small$\tau_{uv}^2$} is large for low frequencies. This exhibits the trend of encoding low-frequency components of the input sample.

\subsection{Effects of the upsampling operation}

Let the $l$-th intermediate-layer feature map {\small$\textbf{F}\in\mathbb{R}^{C_l\times M_0 \times N_0}$} pass through an upsampling layer to extend its width and height to {\small$M \times N$}, subject to $M=M_0\cdot \textit{ratio}$, $N=N_0\cdot \textit{ratio}$ as follows.
\begin{small}
	\begin{equation}\label{eq:upconv}
		\begin{aligned}
			&\forall c, m^*, n^*, \\
			&\widetilde{F}_{m^*n^*}^{(c)}\!=\! \begin{cases}
				F_{mn}^{(c)},	& \textrm{mod}(m^*,\textit{ratio})\!=\! \textrm{mod}(n^*,\textit{ratio})\!=\!0\\
				0,	& \textit{otherwise} 
			\end{cases}\\ 
			&\textit{s.t.}\ \ m=\frac{m^*}{\textit{ratio}};\ n=\frac{n^*}{\textit{ratio}}
		\end{aligned}
	\end{equation}
\end{small}
\begin{theorem}\label{th_upconv}
	(Proof in Appendix~\ref{app:sec:th6})
	Let {\small$\textbf{G}=[G^{(1)},G^{(2)},\ldots,G^{(C_l)}]\in\mathbb{C}^{C_l\times M_0 \times N_0}$} denote spectrums of the {\small$C_l$} channels of feature {\small$\textbf{F}$}. Then, spectrums {\small$\textbf{H}=[H^{(1)},H^{(2)},\ldots,H^{(C_l)}]\in\mathbb{C}^{C_l\times M\times N}$} of the output feature {\small$\widetilde{\textbf{F}}$} can be computed as follows.
	\begin{small}
		\begin{equation}\label{eq:upsample_g}
			\begin{aligned}
				\forall c,u,v,\quad
				&H_{u+(s-1)M_0,v+(t-1)N_0}^{(c)}=G_{uv}^{(c)} \\ 
				&\text{s.t.}\ \ s=1,\ldots,\frac{M}{M_0};\ t=1,\ldots,\frac{N}{N_0}
			\end{aligned}	
		\end{equation}
	\end{small}
\end{theorem}

Theorem~\ref{th_upconv} shows that the upsampling operation repeats the strong magnitude of the fundamental frequency {\small$G_{00}^{(c)}$} of the lower layer to different frequency components {\small$\forall c,H_{u^*v^*}^{(c)}$} of the higher layer, where {\small$u^*=0, M_0,2 M_0, \ldots;v^* =0, N_0,2 N_0,\ldots$}. Besides Figure~\ref{fig:bottleneck}(b), Appendix~\ref{app:sec:exp:upsample} shows such a phenomenon on more datasets.

\textbf{(Conclusion)} The upsampling operation makes the upconvolution operation generate a feature spectrum, in which strong signals of the input periodically appear at certain frequencies. Such strong periodic signals hurt the representation capacity of the network.

More crucially, according to the spectrum propagation in Corollary~\ref{co1_entire}, such periodic frequency components can be further propagated to upper layers. Thus, Corollary~\ref{co1_entire} may provide some clues to differentiate real samples and the generated samples.

\begin{figure*}
	\centering
	\includegraphics[width=0.95\linewidth]{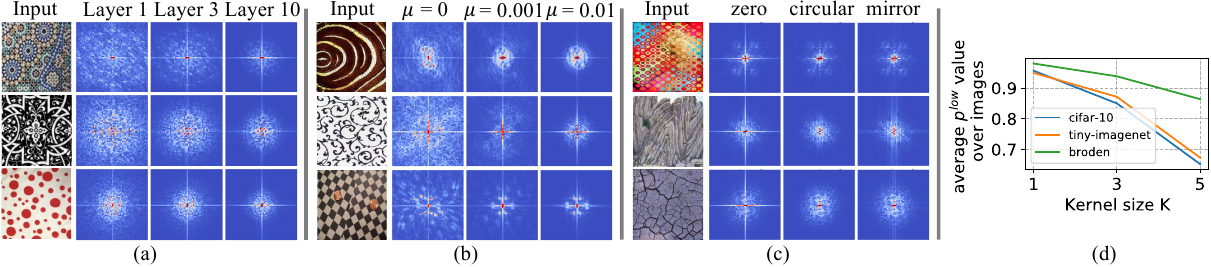}
	\vskip -0.05in
	\caption{(a) A higher layer of a network usually generated features with more low-frequency components, but with less high-frequency components. (b) A network whose convolutional weights have a mean value significantly biased from 0 usually strengthened low-frequency components, but weakened high-frequency components. (c) A network with zero-padding operations usually strengthened more low-frequency components than a network with circular padding operations. (d) A small kernel size {\small$K$} usually made the network learn a higher proportion {\small$p^{\text{low}}$} of low-frequency components. Here, each magnitude map of the feature spectrum was averaged over all channels. For clarity, we moved low frequencies to the center of the spectrum map, and moved high frequencies to corners of the spectrum map. Besides, we only visualized components in the center of the spectrum map with the range of relatively low frequencies\footnotemark[7] in {\small$\Omega^{\text{low}}$} for clarity.}
	\label{fig:remark2_3}
\end{figure*}

\subsection{Difficulty of representing specific frequencies}

Based on the propagation rule in frequency in Section~\ref{sec:dynamics}, we discover a further counter-intuitive phenomenon, \emph{i.e.}, in the scenario of an auto-encoder, if salient frequency components in the input sample and salient frequency components in the target output for regression have a small shift, then the decoder usually cannot be effectively learned.

Let us consider the input {\small$x\in\mathbb{R}^{M\times N}$} with a single channel to simplify the proof. 
Then, {\small$G\in \mathbb{C}^{M\times N}$}, {\small$H\in \mathbb{C}^{M\times N}$}, and {\small$H^*\in \mathbb{C}^{M\times N}$} denote spectrums of the input, the output, and the target image to fit, respectively. In a traditional auto-encoder, people usually set the target image the same as the input image, thereby {\small$H^*=G$}. 
Whereas, let us slightly shift a salient frequency component {\small$[u_1,v_1]$}, which is denoted by {\small$G_{u_1v_1}\in \mathbb{C}$}, to its neighboring frequency {\small$[u_2,v_2]$} to construct {\small$H^*$} and obtain the target image. 
Because the frequency component {\small$G_{u_1v_1}$} is salient, we can consider there is a significant increase of {\small$\lvert H^*_{u_2v_2}\rvert$} and a significance decrease of {\small$\lvert H^*_{u_1v_1}\rvert$}, compared with the traditional setting of {\small$H^*=G$}. Thus, we apply the following setting as a typical case, which simplifies the analysis of the representation problem. That is, {\small$H^*_{u_1v_1} = (1-A)G^{(u_1v_1)}$}, and {\small$H^*_{u_2v_2} = (1+\overline{A})G^{(u_2v_2)}$}, where {\small$A=\alpha e^{i\phi}$}, and {\small$\overline{A}$} denotes the conjugate of {\small$A$}; {\small$\alpha>0$}; {\small$\phi<\frac{\pi}{2}$}.
In this way, we can decrease the significance of {\small$H^*_{u_1v_1}$} and increase the significance of {\small$H^*_{u_2v_2}$}.

According to Corollary~\ref{co1_entire}, learning an identify function {\small$H=G$} is not difficult for an auto-encoder, because we can just make {\small$\forall u,v, \mathbb{T}^{(uv)(L:1)} = I$}. Thus, to investigate the extreme difficulty of learning specific frequencies, let us take the auto-encoder that models the identify function as the baseline network. Then, we further tune the auto-encoder to fit the image with a shifted spectrum {\small$H^*$} and compute the weight changes {\small$\Delta\textbf{W}$} as the cost of network training.
A large weight change\footnote{We represent parameters of multiple layers as a vector.} {\small$\lVert\Delta\textbf{W}\rVert$} indicates the high difficulty of fitting {\small$H^*$}. 
Note that we must ensure that parameters {\small$\textbf{W}$} are real-valued, instead of being complex-valued, when we train the auto-encoder. Theorem~\ref{th:cost} proves a case that {\small$\textbf{W}$} is optimized to satisfy {\small$H_{u_1v_1}=H^*_{u_1v_1}$} and {\small$H_{u_2v_2}=H^*_{u_2v_2}$} simultaneously.

\vskip 0.1in
\begin{theorem}\label{th:cost}
	(Proof in Appendix~\ref{app:sec:difficulty})
	Let us consider the objective function in the form {\small$\lambda_1\lvert H_{u_1v_1}-H^*_{u_1v_1} \rvert^2 + \lambda_2\lvert H_{u_2v_2}-H^*_{u_2v_2} \rvert^2$}. We prove specific constrains of {\small$\lambda_{1}$}, {\small$\lambda_{2}$}, and {\small$A$} that make the auto-encoder learnable (i.e., ensuring {\small$\Delta\textbf{W}$} is real-valued) and make the objective function can reach zero by a single step of gradient descent, which are shown in Equations (\ref{a1}) and (\ref{a2}) in Appendix~\ref{app:sec:difficulty}. Then, we prove the significance of the weight change {\small$\Delta\textbf{W}$}, as follows.
	\vskip -0.1in
	\begin{small}
		\begin{equation}
			\lVert \Delta\textbf{W} \rVert \propto  \frac{\alpha MN}{K^2- \frac{\sin(\frac{K(u_2-u_1)\pi}{M} )\sin(\frac{K(v_2-v_1)\pi }{N})}{\sin(\frac{(u_2-u_1)\pi}{M} )\sin(\frac{(v_2-v_1)\pi}{N})}} 
		\end{equation}
	\end{small}
\end{theorem}
We use the norm of {\small$\Delta\textbf{W}$} to measure the optimization cost (difficulty) to push the auto-encoder to fit a target image, one of whose frequency component is slightly shifted. Theorem~\ref{th:cost} shows that the 
learning difficulty is significantly boosted (\emph{i.e.}, {\small$\lVert \Delta\textbf{W}\rVert$} is much larger) when we shift the target frequency components by a smaller distance {\small$\lVert[u_1,v_1]-[u_2,v_2]\rVert$}.

\section{Experiments}\label{sec:exp}

\subsection{Verifying the weakening of high frequencies}\label{sec:exp:b1}

$\bullet$ \textbf{Verifying that a neural network usually learned lowfrequent components first.}
Our theorems prove that a cascaded convolutional decoder network weakens the encoding of high-frequency components. In this experiment, we visualized spectrums of the image generated by a decoder network, which showed that the decoder usually learned low-frequency components in early epochs and then shifted its attention to high-frequency components. To this end, we constructed a cascaded convolutional auto-encoder by using the VGG-16 \citep{vgg} as the encoder network. The decoder network contained four upconvolutional layers. Each convolutional/upconvolutional layer in the auto-encoder applied zero-paddings and was followed by a batch normalization layer and an ReLU layer. The auto-encoder was trained on the Tiny-ImageNet dataset \citep{le2015tiny} using the mean squared error (MSE) loss for image reconstruction\footnote{Please see Appendix~\ref{app:sec:details} for the number of epochs for the training of each model and its fitting error.}. Our theorem was verified by the well-known phenomenon in Figure~\ref{fig:bottleneck}(a), \emph{i.e.}, an auto-encoder usually first generated images with low-frequency components, and then gradually generated more high-frequency components. Results on more datasets in Appendix~\ref{app:sec:exp:lowfirst} yielded similar conclusions.

$\bullet$ \textbf{Verifying that the zero-padding operation strengthened the encoding of low-frequency components.} To this end, we compared feature spectrums between the network with zero-padding operations and the network without zero-padding operations. Therefore, we constructed the following three baseline networks. The first baseline network contained 5 convolutional layers, and each layer applied zero-paddings. Each convolutional layer contained 16 convolutional kernels (kernel size was 7$\times $7), except for the last layer containing 3 convolutional kernels. The second and the third baseline networks were constructed by replacing all zero-padding operations with circular padding operations and replacing all zero-padding operations with mirror padding operations, respectively. Results on the Broden \citep{Bau2017} dataset in Figure~\ref{fig:remark2_3}(c) show that the network with zero-padding operations encoded more significant low-frequency components than the network with circular padding operations. The mirror padding operation also enhanced the significance of low-frequency components, to some extent. Results on more datasets in Appendix~\ref{app:sec:exp:zero} yielded similar conclusions.

\begin{figure*}
	\centering
	\includegraphics[width=0.9\linewidth]{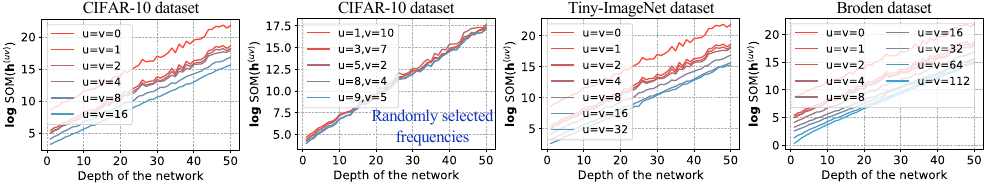}
	\vskip -0.05in
	\caption{The exponential increase of the SOM of feature spectrums, {\small$\textit{SOM}(\textbf{h}^{(uv)})$}, along with the network depth, \emph{i.e.}, the linear increase of {\small$\textbf{\text{log}}\textit{SOM}(\textbf{h}^{(uv)})$} along with the network depth.}
	\label{fig:icml_exp}
\end{figure*}

$\bullet$ \textbf{Verifying factors that strengthened low-frequency
components.} Previous studies \cite{ruderman1994statistics} have empirically found that natural images were dominated by low-frequency components. Therefore, according to Corollaries~\ref{co1_entire} and \ref{co2_back}, we know that if the cascaded convolutional decoder is trained on natural images, then the decoder is more likely to strengthen low-frequencies. Please see Appendix~\ref{app_discussion_natural} for more discussions.	
Besides, we conducted experiments to verify the following three factors that were found to strengthen low frequencies.

\emph{(1) Verifying that a deep network strengthened low-frequency components.}
To this end, we constructed a network with 50 convolutional layers. Each convolutional layer applied zero-paddings to avoid changing the size of feature maps, and was followed by an ReLU layer. We conducted this experiment on three datasets, including CIFAR-10 \citep{krizhevsky2009learning}, Tiny-ImageNet, and Broden datasets, respectively.
The exponential increase of {\small$\boldsymbol{\mathbb{T}}^{(uv)(L:1)}$} along with the network depth {\small$L$} indicated that the frequency component of the network output also increased exponentially along with {\small$L$}.
Therefore, for the frequency component {\small$\textbf{h}^{(uv)}$} generated by
the $l$-th layer in a real decoder network,
we measured its second-order moment {\small$\textit{SOM}(\textbf{h}^{(uv)})$}.
Figure~\ref{fig:icml_exp} shows that {\small$\textit{SOM}(\textbf{h}^{(uv)})$} increased along with the layer number in an exponential manner.

Besides, we visualized feature spectrums of different convolutional layers, which verified the claim that a deep decoder network strengthened the encoding of low-frequency components of the input sample. Results on the Broden dataset in Figure~\ref{fig:remark2_3}(a) show that magnitudes of low frequencies increased along with the network layer number. Results on more datasets in Appendix~\ref{app:sec:exp:factors:L} yielded similar conclusions.

\emph{(2) Verifying that a larger absolute mean value {\small$\mu_l$} of each $l$-th layer's parameters strengthened low-frequency components.} To this end, we compared spectrums of output features, when we set convolution parameters with different mean values {\small$\mu_l$}. Therefore, we applied the network architecture used in the verification of the zero-padding's effects, but we changed the kernel size to 9$\times $9.
Based on this architecture, we constructed three networks, whose parameters were sampled from Gaussian distributions {\small$\mathcal{N}(\mu=0,\sigma^2=0.01^2)$}, {\small$\mathcal{N}(\mu=0.001,\sigma^2=0.01^2)$}, and {\small$\mathcal{N}(\mu=0.01,\sigma^2=0.01^2)$}, respectively. Results on the Broden dataset in Figure~\ref{fig:remark2_3}(b) show that magnitudes of low-frequency components increased along with the absolute mean value of parameters. In addition, Appendix~\ref{app:sec:exp:factors:mu} shows results on more datasets, which also yielded similar conclusions.

We also conducted experiments to measure the effects of large absolute mean values on layers with different depth. Results in Appendix~\ref{app:sec:large_mu_depth} show that no matter which layer had parameters of a large absolute mean value, there was no significant difference in weakening the encoding of high-frequency components. It was because different convolutional layers, including both shallow and deep layers, theoretically had similar roles in affecting the frequency representation of the entire network, according to Corollary~\ref{co1_entire}.

\emph{(3) Verifying that a small kernel size {\small$K$} strengthened low-frequency components.} To this end, we compared feature spectrums of networks with different kernel sizes. Therefore, we constructed three networks with kernel sizes of 1$\times$1, 3$\times$3, and 5$\times$5. Each network contained 5 convolutional layers, each layer contained 16 convolutional kernels, except for the last layer containing 3 kernels. We used the metric {\small$p^{\text{low}}= \frac{\sum_{[u,v]\in\Omega^{\text{low}}}\mathbb{E}_c[\lvert H^{(c)}_{uv}\rvert^2]}{\sum_{uv}\mathbb{E}_c[\lvert H^{(c)}_{uv}\rvert^2]} $} to measure the ratio of low-frequency components to all frequencies, where {\small$\Omega^{\text{low}}$} denoted the set of low-frequency components\footnote{Low-frequencies {\small$[u,v]\in\Omega^{\text{low}}$} were included in {\small$u\in\{u|0 \le u < \frac{M}{8}\} \cup \{u|\frac{7M}{8} \le u < M \}$} and {\small$v\in \{v|0 \le v < \frac{N}{8}  \} \cup \{ v|\frac{7N}{8} \le v < N\}$}.}. Figure~\ref{fig:remark2_3}(d) reports the average {\small$p^{\text{low}}$} value over all images. Results show that the network with a small kernel size encoded more low-frequency components.

\subsection{Verifying the difficulty of fitting shifted frequencies}

This experiment was conducted to verify the difficulty of learning an auto-encoder, when salient frequency components of the target output had a small shift from the spectrum of the input. 
For testing, we constructed an input image {\small$x\sim \mathcal{N}(\textbf{0},\Sigma=0.01\cdot \textbf{I})$} following a Gaussian distribution, which made each frequency component of {\small$x$} have weak strength. We then selected six low frequencies within the range of {\small$u \in \{u | 0 \le u \le 3\} \cup  \{u | 60 < u \le 63\}$} and {\small$v \in \{v | 0 \le v \le 3\} \cup  \{v | 60 < v \le 63\} $}, and we set significant components for these six low frequencies.
Then, we constructed the target image for regression by shifting each salient frequency component {\small$[u,v]$} in {\small$x$} to {\small$[u+\Delta u,v]$} or {\small$[u-\Delta u,v]$} towards higher frequencies\footnote{Please see Appendix~\ref{app:sec:shift} for details about the frequency shift.}. Here, we set {\small$\Delta u=1,4,8$} to generate three target images, respectively.
We trained an auto-encoder with five convolutional layers on each pair of the input image and the target image. Each intermediate convolutional layer contained 64 convolutional kernels (with the kernel size $3\times3$) and followed by an ReLU layer.

We used the metric {\small$\Delta x_{mn} = \big \lVert [x_{mn}^{*(1)},x_{mn}^{*(2)},\ldots,x_{mn}^{*(C)}] - [\hat{x}_{mn}^{(1)},\hat{x}_{mn}^{(2)},\ldots,\hat{x}_{mn}^{(C)}]  \big\rVert_2$} to measure the fitting error on the pixel {\small$[m,n]$} between the target output {\small$x^*\in\mathbb{R}^{C\times M\times N}$} and the network output {\small$\hat{x}\in\mathbb{R}^{C\times M\times N}$}. Figure~\ref{fig:bottleneck}(c) shows the heatmap of the fitting error {\small$ \Delta x_{mn}$}, and the histogram of fitting errors {\small$ \lvert \Delta x_{mn} \rvert $} over different pixels.
Results show that when the target frequency components were shifted by a smaller distance, it was more difficult to learn a decoder to fit the target image, \emph{i.e.,} yielding larger fitting errors in Figure \ref{fig:bottleneck}(c.1,c.2).
Besides, Figure~\ref{fig:bottleneck}(c.3) reports the average weight change {\small$ \lVert \Delta \textbf{W} \rVert$} over different DNNs, which was decreased along with the shifting distance. This also verified that it took more effort to learn a decoder to fit the target image, when the target frequency components were shifted by a smaller distance.

\subsection{Verifying the repeat of certain frequencies}

We conducted experiments to verify the problem that the upsampling operation made a decoder network repeat strong signals at certain frequencies of the generated image in Theorem~\ref{th_upconv}.
To this end, we compared feature spectrums between the input spectrum and the output spectrum of the upsampling layer. We also conducted experiments on the auto-encoder introduced above\footnotemark[6]. Figure~\ref{fig:bottleneck}(b) shows that the decoder network repeated strong signals at certain frequencies of the generated image. Results on more datasets in Appendix~\ref{app:sec:exp:upsample} yielded similar conclusions.

\section{Conclusion}\label{sec:conclusion}
In this paper, we have reformulated the rule for the forward propagation of a cascaded convolutional decoder network in the frequency domain. Based on such propagation rules, we have discovered and theoretically proven that both the convolution operation and the zero-padding operation strengthen low-frequency components in the decoder. The upsampling operation repeats the strong magnitude of the fundamental frequency in the input feature to different frequencies of the spectrum of the output feature map. Besides, we also discover and prove the difficulty of pushing an auto-encoder to fit specific frequency components in the target output, which have a small shift from the spectrum of the input.
Such properties may hurt the representation capacity of a convolutional decoder network. Experiments on ReLU networks have verified our theoretical proofs. 
Note that our findings can explain general trends of networks with above three operations, but cannot derive a deterministic property of a specific network, and cannot be extended to networks for image classification, because we have not derived the property of the max-pooling operation.

\textbf{Acknowledgements.} This work is partially supported
by the National Nature Science Foundation of China (62276165, 62206170), National Key R$\&$D Program of China (2021ZD0111602), Shanghai Natural Science Foundation (21JC1403800,21ZR1434600), National Nature Science Foundation of China (U19B2043), and the Alibaba Group through Alibaba Innovative Research Program.

\bibliography{example_paper}
\bibliographystyle{icml2023}

\newpage
\appendix
\onecolumn

\section{Proofs of our theoretical findings}\label{appendix:theorem}

We first introduce an important equation, which is widely used in the following proofs.

\begin{lemma}\label{le:1}
	\emph{Given $N$ complex numbers, $e^{i n\theta}$, $n=0,1,\ldots,N-1$, the sum of these $N$ complex numbers is given as follows.
		\begin{equation}\label{eq:sum_of_e}
			\forall \theta \in \mathbb{R},\qquad \sum_{n=0}^{N-1}e^{i n\theta} = \frac{\sin(\frac{N \theta}{2})}{\sin(\frac{\theta}{2})}e^{i\frac{(N-1)\theta}{2}} 
		\end{equation}
		Specifically, when $N\theta = 2k\pi , k \in \mathbb{Z}$, $-N < k < N$, we have
		\begin{equation}\label{eq:delta}
			\begin{aligned}
				\forall \theta \in \mathbb{R},\quad &\sum_{n=0}^{N-1}e^{i n\theta} = \frac{\sin(\frac{N \theta}{2})}{\sin(\frac{\theta}{2})}e^{i\frac{(N-1)\theta}{2}} = N \delta_{\theta} ;\quad \text{s.t.} \ N\theta = 2k\pi , k \in \mathbb{Z} ,-N < k < N, \\
				&\textit{where}\quad	\delta_{\theta}=  \begin{cases}
					1,\ &\theta=0 \\ 0,\ & \text{otherwise}\end{cases}
			\end{aligned}
	\end{equation}}
\end{lemma}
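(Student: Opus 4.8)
The statement to prove is Lemma~\ref{le:1}, the closed form for a geometric-type sum of complex exponentials $\sum_{n=0}^{N-1} e^{in\theta}$.

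\medskip

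The plan is to treat this as a finite geometric series in the common ratio $z = e^{i\theta}$ and then massage the result into the stated symmetric form. First I would handle the degenerate case $\theta \in 2\pi\mathbb{Z}$ separately: there every term equals $1$, so the sum is $N$, and one checks this agrees with the claimed formula in the limiting sense (the right-hand side $\frac{\sin(N\theta/2)}{\sin(\theta/2)}e^{i(N-1)\theta/2}$ has a removable singularity there with value $N$). For the main case $\theta \notin 2\pi\mathbb{Z}$, we have $z \neq 1$, so the geometric sum formula gives
\begin{equation*}
	\sum_{n=0}^{N-1} e^{in\theta} = \frac{e^{iN\theta} - 1}{e^{i\theta} - 1}.
\end{equation*}
The key algebraic step is the standard ``halving'' trick: factor $e^{iN\theta}-1 = e^{iN\theta/2}(e^{iN\theta/2} - e^{-iN\theta/2}) = e^{iN\theta/2}\cdot 2i\sin(N\theta/2)$, and likewise $e^{i\theta}-1 = e^{i\theta/2}\cdot 2i\sin(\theta/2)$. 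Dividing, the factors of $2i$ cancel, leaving $\frac{\sin(N\theta/2)}{\sin(\theta/2)} e^{i(N-1)\theta/2}$, which is exactly \eqref{eq:sum_of_e}.

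\medskip

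For the second formula \eqref{eq:delta}: when $N\theta = 2k\pi$ with $k \in \mathbb{Z}$ and $-N < k < N$, I would argue by cases on whether $k = 0$. If $k = 0$ then $\theta = 0$ (within the relevant range of $\theta$, or interpreting the statement with $\theta$ taken modulo $2\pi$), which is the degenerate case above giving the sum $N = N\delta_\theta$. If $k \neq 0$, then $N\theta/2 = k\pi$ so $\sin(N\theta/2) = 0$, while $\theta/2 = k\pi/N$ with $0 < |k| < N$ means $\theta/2$ is not an integer multiple of $\pi$, so $\sin(\theta/2) \neq 0$; hence the numerator vanishes and the sum is $0 = N\delta_\theta$. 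Combining, $\sum_{n=0}^{N-1} e^{in\theta} = N\delta_\theta$ as claimed.

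\medskip

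I do not anticipate a genuine obstacle here — this is a classical identity. The only point requiring mild care is the bookkeeping around the removable singularity at $\theta \in 2\pi\mathbb{Z}$: one should either phrase \eqref{eq:sum_of_e} with the understanding that the right-hand side is extended by continuity at those points, or simply state the $\theta=0$ value separately and note consistency. The constraint $-N < k < N$ in \eqref{eq:delta} is precisely what guarantees $\sin(\theta/2) \neq 0$ when $k \neq 0$, so I would flag explicitly where that hypothesis is used.
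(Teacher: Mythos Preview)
Your proposal is correct and follows essentially the same route as the paper: both obtain $\frac{e^{iN\theta}-1}{e^{i\theta}-1}$ (the paper derives it via the telescoping $e^{i\theta}S-S$ device, you invoke the geometric series formula directly), then apply the identical halving trick $e^{ix}-1=e^{ix/2}\cdot 2i\sin(x/2)$ to reach \eqref{eq:sum_of_e}, and handle \eqref{eq:delta} by the same case split on $k=0$ (limit/removable singularity giving $N$) versus $k\neq 0$ (numerator $\sin(k\pi)=0$, denominator $\sin(k\pi/N)\neq 0$ by $0<|k|<N$). Your explicit remark about where the hypothesis $-N<k<N$ is used is a nice touch the paper leaves implicit.
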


We prove Lemma~\ref{le:1} as follows.

\begin{proof}
	First, let us use the letter $S\in \mathbb{C}$ to denote the term of $\sum_{n=0}^{N-1}e^{i n\theta} $.
	\begin{equation*}
		S = \sum_{n=0}^{N-1}e^{i n \theta } 
	\end{equation*}
	
	Therefore, $e^{i\theta }S$ is formulated as follows.
	\begin{equation*}
		e^{i\theta }S = \sum_{n=1}^{N}e^{in\theta } \in \mathbb{C}
	\end{equation*}
	
	Then, $S$ can be computed as $S= \frac{e^{i\theta }S-S}{e^{i\theta }-1}$. Therefore, we have
	\begin{equation*}
		\begin{aligned}
			S & = \frac{e^{i\theta }S-S}{e^{i\theta }-1} \\
			& = \frac{\sum_{n=1}^{N}e^{in\theta } -\sum_{n=0}^{N-1}e^{i n \theta } }{e^{i\theta }-1} \\
			&= \frac{e^{iN\theta } -1 }{e^{i\theta} -1}\\
			&= \frac{e^{i\frac{N\theta }{2}}- e^{-i\frac{N\theta }{2}}}{e^{i\frac{\theta}{2}}- e^{-i\frac{\theta}{2}}} e^{i\frac{(N-1)\theta}{2}} \\
			&= \frac{(e^{i\frac{N\theta }{2}}- e^{-i\frac{N\theta }{2}})/2i}{(e^{i\frac{\theta}{2}}- e^{-i\frac{\theta}{2}})/2i} e^{i\frac{(N-1)\theta}{2}} \\
			&=\frac{\sin(\frac{N \theta}{2})}{\sin(\frac{\theta}{2})}e^{i\frac{(N-1)\theta}{2}}
		\end{aligned}
	\end{equation*}
	Therefore, we prove that $\sum_{n=0}^{N-1}e^{i n \theta } =\frac{\sin(\frac{N \theta}{2})}{\sin(\frac{\theta}{2})}e^{i\frac{(N-1)\theta}{2}}$. 
	
	Then, we prove the special case that when $N\theta = 2k\pi , k \in \mathbb{Z} ,-N < k < N$, $\sum_{n=0}^{N-1}e^{i n\theta} = N \delta_{\theta}=\begin{cases}
		N,\ &\theta=0 \\ 0,\ & \text{otherwise}\end{cases}$, as follows.
	
	When $\theta = 0$, we have
	\begin{equation*}
		\begin{aligned}
			\lim_{\theta \to 0}  \sum_{n=0}^{N-1} e^{in\theta} 
			&= \lim_{\theta \to 0}  \frac{\sin(\frac{N\theta}{2})}{\sin(\frac{\theta}{2})}e^{i\frac{(N-1)\theta}{2}} \\
			&= \lim_{\theta \to 0}  \frac{\sin(\frac{N\theta}{2})}{\sin(\frac{\theta}{2})} \\  
			&= N
		\end{aligned}
	\end{equation*}
	
	When $\theta \ne 0$, and $N\theta = 2k\pi,k \in \mathbb{Z} ,-N < k < N$, we have
	\begin{equation*}
		\begin{aligned}
			\sum_{n=0}^{N-1} e^{in\theta} 
			&= \frac{\sin(\frac{N\theta}{2})}{\sin(\frac{\theta}{2})}e^{i\frac{(N-1)\theta}{2}} \\
			&= \frac{\sin(k\pi)}{\sin(\frac{k\pi}{N})}e^{i\frac{(N-1)k\pi}{N}} \\
			&= 0 \\
		\end{aligned}
	\end{equation*}
	
\end{proof}

In the following proofs, the following two equations are widely used, which are derived based on Lemma~\ref{le:1}.

\begin{equation*}
	\begin{aligned}
		\sum_{m=0}^{M-1}\sum_{n=0}^{N-1} e^{-i(\frac{um}{M}+\frac{vn}{N})2\pi} 
		&= \sum_{m=0}^{M-1}  e^{im(-\frac{u2\pi}{M})} \sum_{n=0}^{N-1}  e^{in(-\frac{v2\pi}{N})}
		\\ 
		& = (M\delta_{-\frac{u2\pi}{M}}) (N\delta_{-\frac{v2\pi}{N}})
		\quad {\rm \slash \slash According\ to\ Equation~(\ref{eq:delta}})\\
		& = \begin{cases}
			MN,\ &u=v=0 \\ 0,\ & \text{otherwise}
		\end{cases}
	\end{aligned}
\end{equation*}

To simplify the representation, \textbf{let $\delta_{uv}$ be the simplification of  $\delta_{-\frac{u2\pi}{M}}\delta_{-\frac{v2\pi}{N}}$ in the following proofs.} Therefore, we have 
\begin{equation}\label{eq:delta_uv}
	\sum_{m=0}^{M-1}\sum_{n=0}^{N-1} e^{-i(\frac{um}{M}+\frac{vn}{N})2\pi} = MN\delta_{uv} = \begin{cases}
		MN,\ &u=v=0 \\ 0,\ & \text{otherwise}
	\end{cases}
\end{equation}

Similarly, we derive the second equation as follows.

\begin{small}\begin{equation}\label{eq:delta_u_v_}
		\begin{aligned}
			\sum_{m=0}^{M-1}\sum_{n=0}^{N-1} e^{i(\frac{(u-u')m}{M}+\frac{(v-v')n}{N})2\pi} 
			&= \sum_{m=0}^{M-1}  e^{im(\frac{(u-u')2\pi}{M})} \sum_{n=0}^{N-1}  e^{in(\frac{(v-v')2\pi}{N})} \\
			& = MN\delta_{\frac{(u-u')2\pi}{M}}\delta_{\frac{(v-v')2\pi}{N}}
			\quad {\rm \slash \slash According\ to\ Equation~(\ref{eq:delta}})\\
			& = MN\delta_{u-u'}\delta_{v-v'}\\
			& = \begin{cases}
				MN,\ &u'=u;v'=v \\ 0,\ & \text{otherwise}
			\end{cases}
		\end{aligned}
\end{equation}\end{small}

\subsection{Proof of Theorem~\ref{th2_layerwise}}\label{app:sec:th2}
In this section, we prove Theorem \ref{th2_layerwise} in Section~\ref{sec:dynamics} of the main paper, as follows.

\begin{proof}
	
	Given each $c$-th channel of the feature spectrum $G^{(c)}$, the corresponding feature $F^{(c)}$ in the time domain can be computed as follows.
	\begin{equation*}
		F^{(c)}_{mn} = \frac{1}{MN}\sum_{u=0}^{M-1}\sum_{v=0}^{N-1} G^{(c)}_{uv} e^{i(\frac{um}{M}+\frac{vn}{N})2\pi} 
	\end{equation*}
	
	Then, let us conduct the convlution operation (in Equation~(\ref{eq:layerwise_conv}) in the main paper) on feature $\textbf{F}=[F^{(1)},F^{(2)},\ldots,F^{(C)}]$, in order to obtain the output feature $\widetilde{\textbf{F}}\in\mathbb{R}^{D\times M' \times N'}$.
	\begin{small}\begin{equation*}
			\begin{aligned}
				\forall d=1,2,\ldots,&D; 0\le m< M';0\le n<N'; \quad\\
				\tilde{F}^{(d)}_{mn} &=b^{(d)} + \sum_{c=1}^{C} \sum_{t=0}^{K-1}\sum_{s=0}^{K-1}W_{cts}^{ker=d} F^{(c)}_{m+t,n+s}\\ 
				&= b^{(d)} +\sum_{c=1}^{C}   \sum_{t=0}^{K-1}\sum_{s=0}^{K-1}W_{cts}^{ker=d} \frac{1}{MN} \sum_{u=0}^{M-1}\sum_{v=0}^{N-1} G^{(c)}_{uv}  e^{i(\frac{u(m+t)}{M}+\frac{v(n+s)}{N})2\pi}\\
				&= b^{(d)} +\sum_{c=1}^{C} \frac{1}{MN}  \sum_{u=0}^{M-1}\sum_{v=0}^{N-1} G^{(c)}_{uv} e^{i(\frac{um}{M}+\frac{vn}{N})2\pi}\sum_{t=0}^{K-1}\sum_{s=0}^{K-1}W_{cts}^{ker=d}e^{i(\frac{ut}{M}+\frac{vs}{N})2\pi} \\
				&= b^{(d)} +\sum_{c=1}^{C} \frac{1}{MN}  \sum_{u=0}^{M-1}\sum_{v=0}^{N-1}T_{dc}^{(uv)} G^{(c)}_{uv} e^{i(\frac{um}{M}+\frac{vn}{N})2\pi} \\
			\end{aligned}
	\end{equation*}\end{small}
	
	Then, let us conduct the DFT on each channel of $\widetilde{\textbf{F}}$, in order to obtain feature spectrums $H_{u'v'}^{(d)}$ of $\widetilde{\textbf{F}}$.
	\begin{small}\begin{equation*}
			\begin{aligned}
				\forall d=1,2,&\ldots,D;\ 0\le u'< M';\ 0\le v'<N'; \quad \\
				H_{u'v'}^{(d)} &= \sum_{m=0}^{M'-1}\sum_{n=0}^{N'-1} \widetilde{F}^{(l,d)}_{mn} e^{-i(\frac{u'm}{M'}+\frac{v'n}{N'})2\pi} \\
				&=\sum_{m=0}^{M'-1}\sum_{n=0}^{N'-1} e^{-i(\frac{u'm}{M'}+\frac{v'n}{N'})2\pi} (b^{(d)} +\sum_{c=1}^{C} \frac{1}{MN}  \sum_{u=0}^{M-1}\sum_{v=0}^{N-1} T_{dc}^{(uv)} G^{(c)}_{uv} e^{i(\frac{um}{M}+\frac{vn}{N})2\pi})
				\quad {\rm \slash \slash Equation~(\ref{eq:delta_uv})} \\ 
				&= M'N' b^{(d)} \delta_{u'v'} + \sum_{c=1}^{C}  \sum_{u=0}^{M-1}\sum_{v=0}^{N-1} T_{dc}^{(uv)} G^{(c)}_{uv} \frac{1}{MN} \sum_{m=0}^{M'-1}\sum_{n=0}^{N'-1} e^{i((\frac{u}{M}-\frac{u'}{M'})m+(\frac{v}{N}-\frac{v'}{N'})n)2\pi} \\
				&\slash \slash \text{ Let} \ \alpha_{u'v'uv} = \frac{1}{MN} \sum_{m=0}^{M'-1}\sum_{n=0}^{N'-1} e^{i((\frac{u}{M}-\frac{u'}{M'})m+(\frac{v}{N}-\frac{v'}{N'})n)2\pi} \\
				&= M'N' b^{(d)} \delta_{u'v'} +\sum_{u=0}^{M-1}\sum_{v=0}^{N-1} \alpha_{u'v'uv} \sum_{c=1}^{C}   T_{dc}^{(uv)} G^{(c)}_{uv} 
				\quad 
			\end{aligned}
	\end{equation*}\end{small}
	
	When the convlution operation does not apply paddings, and its stride size is 1, $M'=M-K+1,N'=N-K+1$. In this way, $\alpha_{u'v'uv}$ can be rewritten as follows.
	\begin{small}
		\begin{equation}\label{app:eq:alpha}
			\begin{aligned}
				\alpha_{u'v'uv} &= \frac{1}{MN}\sum_{m=0}^{M'-1}\sum_{n=0}^{N'-1} e^{i((\frac{u}{M}-\frac{u'}{M'})m+(\frac{v}{N}-\frac{v'}{N'})n)2\pi} \\
				& \slash \slash M'=M-K+1,N'=N-K+1\\
				&= \frac{1}{MN}\sum_{m=0}^{M-K}\sum_{n=0}^{N-K} e^{i((\frac{u}{M}-\frac{u'}{M-K+1})m+(\frac{v}{N}-\frac{v'}{N-K+1})n)2\pi} \\
				&= \frac{1}{MN}\sum_{m=0}^{M-K}e^{i(\frac{u}{M}-\frac{u'}{M-K+1})2\pi m} \sum_{n=0}^{N-K} e^{i(\frac{v}{N}-\frac{v'}{N-K+1})2\pi n } \\
				&  {\rm \slash \slash According\ to\ Equation~(\ref{eq:sum_of_e})} \\
				&= \frac{1}{MN} \frac{\sin((M-K)\lambda_{uu'}\pi)}{\sin(\lambda_{uu'}\pi)}\frac{\sin((N-K)\gamma_{vv'}\pi)}{\sin(\gamma_{vv'}\pi)}e^{i((M-K)\lambda_{uu'}+(N-K)\gamma_{vv'})\pi}  \\
			\end{aligned}
		\end{equation}
	\end{small}
	where $\lambda_{uu'} = \frac{(u-u')M-u(K-1)}{M(M-K+1)}$,
	$\gamma_{vv'} = \frac{(v-v')N-v(K-1)}{N(N-K+1)}$.
	
	Therefore, we prove that the vector $\textbf{h}^{(u'v')} = [H_{u'v'}^{(1)},H_{u'v'}^{(2)},\ldots,H_{u'v'}^{(D)}]^{\top}\in\mathbb{C}^D$ can be computed as follows.
	\begin{equation*}
		\forall d=1,2,\ldots,D; \quad \textbf{h}^{(u'v')} = \delta_{u'v'} M'N' \textbf{b} + \sum_{u=0}^{M-1}\sum_{v=0}^{N-1} \alpha_{u'v'uv}  T^{(uv)} \textbf{g}^{(uv)}  \\
	\end{equation*}

	Furthermore, based on Assumption~\ref{ass1}, the convolution operation does not change the size of the feature map, \emph{i.e.}, $M'=M$, $N'=N$. In this case, $\alpha_{u'v'uv}$ can be computed as follows.
	
	\begin{equation}\label{eq:alpha-delta}
		\begin{aligned}
			\alpha_{u'v'uv} &= \frac{1}{MN}\sum_{m=0}^{M'-1}\sum_{n=0}^{N'-1} e^{i((\frac{u}{M}-\frac{u'}{M'})m+(\frac{v}{N}-\frac{v'}{N'})n)2\pi} \\
			&= \frac{1}{MN}\sum_{m=0}^{M-1}\sum_{n=0}^{N-1} e^{i(\frac{(u-u')m}{M}+\frac{(v-v')n}{N})2\pi} 
			\quad \slash \slash M'=M,N'=N\\
			&= \frac{1}{MN}\sum_{m=0}^{M-1}e^{i(\frac{(u-u')2\pi}{M})m} \sum_{n=0}^{N-1}e^{i(\frac{(v-v')2\pi}{N})n} 
			\quad {\rm \slash \slash According\ to\ Equation~(\ref{eq:delta_u_v_})}\\
			&= \delta_{u-u'}\delta_{v-v'}
		\end{aligned}
	\end{equation}
	where\ $ \delta_{u-u'} = \left\{\begin{aligned} 1,\ &u'=u \\ 0,\ & \text{otherwise}\end{aligned} \right.$; 
	$ \delta_{v-v'} = \left\{\begin{aligned} 1,\ &v'=v \\ 0,\ & \text{otherwise}\end{aligned} \right.$.
	
	Therefore, $\textbf{h}^{(u'v')}$ can be computed as follows.
	\begin{equation*}
		\begin{aligned}
			\textbf{h}^{(u'v')} &= \sum_{u=0}^{M'-1}\sum_{v=0}^{N'-1} \alpha_{u'v'uv}  T^{(u'v')} \textbf{g}^{(u'v')}+ \delta_{u'v'}  M'N' \textbf{b}  \\
			&=\sum_{u=0}^{M-1}\sum_{v=0}^{N-1} \delta_{u-u'}\delta_{v-v'}  T^{(u'v')} \textbf{g}^{(u'v')}+ \delta_{u'v'} MN \textbf{b}  \\
			&=T^{(u'v')} \textbf{g}^{(u'v')} + MN \textbf{b} \delta_{u'v'}
		\end{aligned}
	\end{equation*}
	
	Then, we prove that $\textbf{h}^{(uv)}=T^{(uv)} \textbf{g}^{(uv)} + MN \textbf{b} \delta_{uv}$.
\end{proof}

\subsection{Proof of Corollary~\ref{co1_entire}}\label{app:sec:co1}
In this section, we prove Corollary~\ref{co1_entire} in Section~\ref{sec:dynamics} of the main paper, as follows.

\begin{proof}
	Let $\textbf{G}^{(l)} = [G^{(l,1)},G^{(l,2)},\cdots,G^{(l,C_{l})}] \in \mathbb{C}^{C_{l} \times M \times N}$ denote feature spectrums of the $l$-th layer. Let $\textbf{g}^{(l,uv)} = [G^{(l,1)}_{uv},G^{(l,2)}_{uv},\cdots,G^{(l,C_{l})}_{uv}]^{\top} \in \mathbb{C}^{C_{l}} $ denote the frequency component at the frequency $[u,v]$. When $l=0$, $\textbf{g}^{(0,uv)}$ denotes the frequency component of the input sample. When $l=L$, $\textbf{g}^{(L,uv)}$ denotes the frequency component of the network output.
	
	Based on Theorem~\ref{th2_layerwise}, $\textbf{g}^{(l,uv)}$ can be computed as follows.
	\begin{equation*}
		\begin{aligned}
			\forall l=1,2,\ldots,L, \quad \textbf{g}^{(l,uv)} 
			&= T^{(l,uv)} \textbf{g}^{(l-1,uv)} + \delta_{uv} MN \textbf{b}^{(l)} 
		\end{aligned}
	\end{equation*}
	
	Then, the frequency component $\textbf{g}^{(L,uv)}$ of the network output can be computed as follows.
	\begin{small}
		\begin{equation*}
			\begin{aligned}
				\textbf{g}^{(L,uv)} 
				&= T^{(L,uv)} \textbf{g}^{(L-1,uv)} + \delta_{uv} MN \textbf{b}^{(L)}  \\
				&= T^{(L,uv)} (T^{(L-1,uv)} \textbf{g}^{(L-2,uv)} + \delta_{uv} MN \textbf{b}^{(L-1)}) + \delta_{uv} MN \textbf{b}^{(L)}  \\
				&= T^{(L,uv)} T^{(L-1,uv)} \textbf{g}^{(L-2,uv)} + T^{(L,uv)} \delta_{uv} MN \textbf{b}^{(L-1)} + \delta_{uv} MN \textbf{b}^{(L)}  \\
				&= \cdots \\
				&= T^{(l,uv)}_{dc}\cdots T^{(1,uv)} \textbf{g}^{(0,uv)} + MN T^{(l,uv)}_{dc}\cdots T^{(2,uv)} \textbf{b}^{(1)}\delta_{uv} + \cdots + MN \textbf{b}^{(L)}\delta_{uv}\\
				&= T^{(l,uv)}_{dc}\cdots T^{(1,uv)} \textbf{g}^{(0,uv)} + \delta_{uv} MN ( T^{(l,uv)}_{dc}\cdots T^{(2,uv)} \textbf{b}^{(1)} + \cdots + MN \textbf{b}^{(L)})
			\end{aligned}
		\end{equation*}
	\end{small}
	
	Let $ \mathbb{T}^{(uv)(L:1)} = T^{(l,uv)}_{dc}\cdots T^{(2,uv)} T^{(1,uv)}$ and $\boldsymbol{\beta}=MN\big(\textbf{b}^{(L)}+\sum_{j=2}^{L}\boldsymbol{\mathbb{T}}^{(00)(L:j)}\textbf{b}^{(j-1)} \big)$. Let $\textbf{h}^{(uv)}=\textbf{g}^{(L,uv)}$ denote the frequency component of the network output, and let $\textbf{g}^{(uv)}=\textbf{g}^{(0,uv)}$ denote the frequency component of the input sample. Then, we prove that $\textbf{h}^{(uv)}$ can be computed as follows.
	\begin{equation*}
		\textbf{h}^{(uv)} =\mathbb{T}^{(uv)(L:1)} \textbf{g}^{(uv)}+ \delta_{uv}\boldsymbol{\beta}
	\end{equation*}
	
\end{proof}

\subsection{Proof of Corollary~\ref{co2_back}}\label{app:sec:co2}
In this section, we prove Corollary~\ref{co2_back} in Section~\ref{sec:dynamics} of the main paper, as follows.

\begin{proof}
	
	\textbf{First, we focus on a single convolutional layer.}
	
	According to the DFT and the inverse DFT, we can obtain the mathematical relationship between $G_{uv}^{(l,c)}$ and $F_{mn}^{(l,c)}$, and the mathematical relationship between $T_{dc}^{(l,uv)}$ and $W_{cts}^{(l)[\text{ker}=d]}$, as follows.
	\begin{small}\begin{equation}\label{eq:dft}
			\begin{aligned}
				\left\{
				\begin{aligned}
					& G^{(l,c)}_{uv} = \sum_{m=0}^{M-1}\sum_{n=0}^{N-1} F^{(l,c)}_{mn}e^{-i(\frac{um}{M}+\frac{vn}{N})2\pi}  \\
					& F^{(l,c)}_{mn} = \frac{1}{MN}\sum_{u=0}^{M-1}\sum_{v=0}^{N-1}G^{(l,c)}_{uv}e^{i(\frac{um}{M}+\frac{vn}{N})2\pi} 
				\end{aligned}
				\right. 
				\left\{
				\begin{aligned}
					& T^{(l,uv)}_{dc} = \sum_{t=0}^{K-1}\sum_{s=0}^{K-1} W_{cts}^{(l)[\text{ker}=d]}e^{i(\frac{ut}{M}+\frac{vs}{N})2\pi} \\
					& W_{cts}^{(l)[\text{ker}=d]} = \frac{1}{MN}\sum_{u=0}^{M-1}\sum_{v=0}^{N-1}T^{(l,uv)}_{dc}e^{-i(\frac{ut}{M}+\frac{vs}{N})2\pi} 
				\end{aligned}
				\right.
			\end{aligned}
	\end{equation}\end{small}
	
	Based on Equation~(\ref{eq:dft}) and the derivation rule for complex numbers \citep{kreutz2009complex}, we can obtain the mathematical relationship between $\frac{\partial \textit{Loss}}{\partial \overline{G}_{uv}^{(l,c)}}$ and $\frac{\partial \textit{Loss}}{\partial \overline{F}_{mn}^{(l,c)}}$, and the mathematical relationship between $\frac{\partial \textit{Loss}}{\partial \overline{T}_{dc}^{(l,uv)}}$ and $\frac{\partial \textit{Loss}}{\partial\overline{ W}_{cts}^{(l)[\text{ker}=d]} }$, as follows. Note that when we use gradient descent to optimize a real-valued loss function $\textit{Loss}$ with complex variables, people usually treat the real and imaginary values, $a\in\mathbb{C}$ and $b\in\mathbb{C}$, of a complex variable ($z=a+bi$) as two separate real-valued variables, and separately update these two real-valued variables. In this way, the exact optimization step of $z$ computed based on such a technology is equivalent to $\frac{\partial \textit{Loss}}{\partial \overline{z}}$. Since $F^{(l,c)}_{mn}$ and $W_{cts}^{(l)[\text{ker=d}]}$ are real numbers, $\frac{\partial Loss}{\partial \overline{F}^{(l,c)}_{mn}} =	\frac{\partial Loss}{\partial F^{(l,c)}_{mn}}$ and $\frac{\partial Loss}{\partial \overline{W}_{cts}^{(l)[\text{ker=d}]}} =	\frac{\partial Loss}{\partial W_{cts}^{(l)[\text{ker=d}]}}$.
	
	\begin{small}\begin{equation}\label{eq:dft_grad}
			\begin{aligned}
				\left\{
				\begin{aligned}
					& \frac{\partial Loss}{\partial \overline{G}^{(l,c)}_{uv}} =\frac{1}{MN}\sum_{m=0}^{M-1}\sum_{n=0}^{N-1}\frac{\partial Loss}{\partial \overline{F}^{(l,c)}_{mn}}e^{-i(\frac{um}{M}+\frac{vn}{N})2\pi}  \\
					& \frac{\partial Loss}{\partial \overline{F}^{(l,c)}_{mn}} =\sum_{u=0}^{M-1}\sum_{v=0}^{N-1} \frac{\partial Loss}{\partial \overline{G}^{(l,c)}_{uv}}e^{i(\frac{um}{M}+\frac{vn}{N})2\pi}
				\end{aligned}
				\right.
				\left\{
				\begin{aligned}
					& \frac{\partial Loss}{\partial \overline{T}^{(l,uv)}_{dc} }= \frac{1}{MN}\sum_{t=0}^{K-1}\sum_{s=0}^{K-1} \frac{\partial Loss}{\partial  \overline{W}_{cts}^{(l)[\text{ker=d}]}}e^{i(\frac{ut}{M}+\frac{vs}{N})2\pi}  \\
					& \frac{\partial Loss}{\partial \overline{W}_{cts}^{(l)[\text{ker=d}]}} = \sum_{u=0}^{M-1}\sum_{v=0}^{N-1}\frac{\partial Loss}{\partial \overline{T}^{(l,uv)}_{dc}}e^{-i(\frac{ut}{M}+\frac{vs}{N})2\pi} 
				\end{aligned}
				\right.
			\end{aligned}
	\end{equation}\end{small}

	Let us conduct the convolution operation (based on Assumption~\ref{ass1}) on the feature map $\textbf{F}^{(l-1)}=[F^{(l-1,1)},F^{(l-1,2)},\ldots,F^{(l-1,C)}]\in\mathbb{R}^{C\times M\times N}$, and obtain the output feature map $\textbf{F}^{(l)}=[F^{(l,1)},F^{(l,2)},\ldots,F^{(l,D)}]\in\mathbb{R}^{D\times M\times N}$ of the l-th layer as follows.
	\begin{equation}\label{app_eq:conv}
		F^{(l,d)}_{mn} =b^{(d)} + \sum_{c=1}^{C} \sum_{t=0}^{K-1}\sum_{s=0}^{K-1}W_{cts}^{(l)[\text{ker}=d]} F^{(l-1,c)}_{m+t,n+s}\\ 
	\end{equation}
	
	Based on Equation~(\ref{eq:dft}) and Equation~(\ref{eq:dft_grad}), and the derivation rule for complex numbers \citep{kreutz2009complex}, the exact optimization step of $T^{(l,uv)}_{dc}$ in real implementations can be computed as follows.
	\begin{small}\begin{equation*}
			\begin{aligned}
				&\frac{\partial \textit{Loss}}{\partial \overline{T}^{(l,uv)}_{dc} } \\
				&= \frac{1}{MN}\sum_{t=0}^{K-1}\sum_{s=0}^{K-1} \frac{\partial \textit{Loss}}{\partial \overline{W}_{cts}^{(l)[\text{ker}=d]}}e^{i(\frac{ut}{M}+\frac{vs}{N})2\pi} 
				\quad {\rm \slash \slash Equation~(\ref{eq:dft_grad})}\\
				& =\frac{1}{MN}\sum_{t=0}^{K-1}\sum_{s=0}^{K-1} \left( \sum_{m=0}^{M-1}\sum_{n=0}^{N-1} \frac{\partial \textit{Loss}}{\partial \overline{F}^{(l,d)}_{mn}}\cdot \overline{F}^{(l-1,c)}_{m+t,n+s}\right)e^{i(\frac{ut}{M}+\frac{vs}{N})2\pi} 
				\quad {\rm \slash \slash Equation~(\ref{app_eq:conv})}\\
				&  \quad {\rm \slash \slash Equation~(\ref{eq:dft})} \\
				& = \frac{1}{MN}\sum_{t=0}^{K-1}\sum_{s=0}^{K-1} \left(\sum_{m=0}^{M-1}\sum_{n=0}^{N-1} \frac{\partial \textit{Loss}}{\partial \overline{F}^{(l,d)}_{mn}}\cdot  \frac{1}{MN}\sum_{u'=0}^{M-1}\sum_{v'=0}^{N-1} \overline{G}^{(l-1,c)}_{u'v'} e^{-i(\frac{u'(m+t)}{M}+\frac{v'(n+s)}{N})2\pi} \right)e^{i(\frac{ut}{M}+\frac{vs}{N})2\pi}\\
				& = \frac{1}{MN}\sum_{t=0}^{K-1}\sum_{s=0}^{K-1} \left( \sum_{u'=0}^{M-1}\sum_{v'=0}^{N-1} \overline{G}^{(l-1,c)}_{u'v'} e^{-i(\frac{u't}{M}+\frac{v's}{N})2\pi} \cdot \frac{1}{MN}\sum_{m=0}^{M-1}\sum_{n=0}^{N-1}\frac{\partial \textit{Loss}}{\partial \overline{F}^{(l,d)}_{mn}} e^{-i(\frac{u'm}{M}+\frac{v'n}{N})2\pi} \right)e^{i(\frac{ut}{M}+\frac{vs}{N})2\pi} \\
				& = \frac{1}{MN}\sum_{t=0}^{K-1}\sum_{s=0}^{K-1} \left( \sum_{u'=0}^{M-1}\sum_{v'=0}^{N-1}\overline{G}^{(l-1,c)}_{u'v'}  \frac{\partial Loss}{\partial \overline{G}^{(l,d)}_{u'v'}}e^{-i(\frac{u't}{M}+\frac{v's}{N})2\pi} \right)e^{i(\frac{ut}{M}+\frac{vs}{N})2\pi} \quad {\rm \slash \slash Equation~(\ref{eq:dft_grad})} \\
				&= \frac{1}{MN}\sum_{t=0}^{K-1}\sum_{s=0}^{K-1} \sum_{u'=0}^{M-1}\sum_{v'=0}^{N-1}\overline{G}^{(l-1,c)}_{u'v'}  \frac{\partial \textit{Loss}}{\partial \overline{G}^{(l,d)}_{u'v'}} e^{i(\frac{(u-u')t}{M}+\frac{(v-v')s}{N})2\pi}\\
				&=  \sum_{u'=0}^{M-1}\sum_{v'=0}^{N-1}\overline{G}^{(l-1,c)}_{u'v'}  \frac{\partial \textit{Loss}}{\partial \overline{G}^{(l,d)}_{u'v'}} \cdot \frac{1}{MN}\sum_{t=0}^{K-1}\sum_{s=0}^{K-1}  e^{i(\frac{(u-u')t}{M}+\frac{(v-v')s}{N})2\pi} \\
				& \slash \slash \text{ Let} \ \chi_{u'v'uv} = \frac{1}{MN}\sum_{t=0}^{K-1}\sum_{s=0}^{K-1}  e^{i(\frac{(u-u')t}{M}+\frac{(v-v')s}{N})2\pi} \\
				&=  \sum_{u'=0}^{M-1} \sum_{v'=0}^{N-1} \chi_{u'v'uv}  \overline{G}^{(l-1,c)}_{u'v'}  \frac{\partial \textit{Loss}}{\partial \overline{G}^{(l,d)}_{u'v'}} 
			\end{aligned}
	\end{equation*}\end{small}
	where $\chi_{u'v'uv}$ can be rewritten as follows.
	\begin{small}\begin{equation*}
			\begin{aligned}
				\chi_{u'v'uv}
				&= \frac{1}{MN}\sum_{t=0}^{K-1}\sum_{s=0}^{K-1}  e^{i(\frac{(u-u')t}{M}+\frac{(v-v')s}{N})2\pi}\\
				&= \frac{1}{MN}\sum_{t=0}^{K-1}e^{i\frac{(u-u')2\pi}{M} t} \sum_{s=0}^{K-1} e^{i\frac{(v-v')2\pi}{N} s}\\
				&=  \frac{1}{MN} \frac{\sin(\frac{K(u-u')\pi}{M})}{\sin(\frac{(u-u')\pi}{M})}  \frac{\sin(\frac{K(v-v')\pi}{N})}{\sin(\frac{(v-v')\pi}{N})} \cdot e^{i(\frac{(K-1)(u-u')}{M}+\frac{(K-1)(v-v')}{N})\pi}  \quad {\rm \slash \slash According\ to\ Equation\ (\ref{eq:sum_of_e})} \\
			\end{aligned}
	\end{equation*}\end{small}

	Similarly, we computed the gradient of the loss function \emph{w.r.t.} the spectrum map $\overline{G}^{(l-1,c)}$ as follows.
	\begin{small}\begin{equation*}
			\begin{aligned}
				&\frac{\partial Loss}{\partial \overline{G}^{(l-1,c)}_{u'v'}} \\
				&= \frac{1}{MN}\sum_{m=0}^{M-1}\sum_{n=0}^{N-1}\frac{\partial Loss}{\partial\overline{F}^{(l-1,c)}_{mn}}e^{-i(\frac{u'm}{M}+\frac{v'n}{N})2\pi}
				\quad {\rm \slash \slash Equation~(\ref{eq:dft_grad})}\\
				&= \frac{1}{MN}\sum_{m=0}^{M-1}\sum_{n=0}^{N-1} \left( \sum_{t=0}^{K-1}\sum_{s=0}^{K-1} \overline{W}_{cts}^{(l)[\text{ker}=d]} \cdot \frac{\partial Loss}{\partial \overline{F}^{(l,d)}_{m-t,n-s}} \right)e^{-i(\frac{u'm}{M}+\frac{v'n}{N})2\pi} \quad {\rm \slash \slash Equation~(\ref{app_eq:conv})}\\
				& {\rm \slash \slash According \ to \ Equation~(\ref{eq:dft_grad})} \\
				&= \frac{1}{MN}\sum_{m=0}^{M-1}\sum_{n=0}^{N-1} \left(\sum_{t=0}^{K-1}\sum_{s=0}^{K-1} \overline{W}_{cts}^{(l)[\text{ker}=d]} \cdot\sum_{u=0}^{M-1}\sum_{v=0}^{N-1} \frac{\partial Loss}{\partial \overline{G}^{(l,d)}_{uv}} e^{i(\frac{u(m-t)}{M}+\frac{v(n-s)}{N})2\pi} \right)e^{-i(\frac{u'm}{M}+\frac{v'n}{N})2\pi}\\
				&= \frac{1}{MN}\sum_{m=0}^{M-1}\sum_{n=0}^{N-1} \left(\sum_{u=0}^{M-1}\sum_{v=0}^{N-1} \frac{\partial Loss}{\partial \overline{G}^{(l,d)}_{uv}}e^{i(\frac{um}{M}+\frac{vn}{N})2\pi} \cdot \sum_{t=0}^{K-1}\sum_{s=0}^{K-1} \overline{W}_{cts}^{(l)[\text{ker}=d]}e^{-i(\frac{ut}{M}+\frac{vs}{N})2\pi} \right) e^{-i(\frac{u'm}{M}+\frac{v'n}{N})2\pi}\\
				&= \frac{1}{MN}\sum_{m=0}^{M-1}\sum_{n=0}^{N-1} \left( \sum_{u=0}^{M-1}\sum_{v=0}^{N-1} \frac{\partial Loss}{\partial \overline{G}^{(l,d)}_{uv}}\overline{T}^{(l,uv)}_{dc}e^{i(\frac{um}{M}+\frac{vn}{N})2\pi} \right)e^{-i(\frac{u'm}{M}+\frac{v'n}{N})2\pi}\quad {\rm \slash \slash Equation~(\ref{eq:dft})}\\
				&= \sum_{u=0}^{M-1}\sum_{v=0}^{N-1} \frac{\partial Loss}{\partial \overline{G}^{(l,d)}_{uv}}\overline{T}^{(l,uv)}_{dc} \cdot \frac{1}{MN}\sum_{m=0}^{M-1}\sum_{n=0}^{N-1}e^{i(\frac{(u-u')m}{M}+\frac{(v-v')n}{N})2\pi}\\
				&= \sum_{u=0}^{M-1}\sum_{v=0}^{N-1} \frac{\partial Loss}{\partial \overline{G}^{(l,d)}_{uv}}\overline{T}^{(l,uv)}_{dc} \cdot \delta_{u-u'}\delta_{v-v'}
				\quad {\rm \slash \slash Equation~(\ref{eq:delta_u_v_})}\\
				&=  \frac{\partial Loss}{\partial \overline{G}^{(l,d)}_{u'v'}}\overline{T}^{(l,u'v')}_{dc}\\
			\end{aligned}
	\end{equation*}\end{small}

	Based on the derived $\frac{\partial \textit{Loss}}{\partial \overline{T}^{(l,uv)}_{dc} } \in\mathbb{C}$ and $\frac{\partial Loss}{\partial \overline{G}^{(l-1,c)}_{u'v'}} \in\mathbb{C}$, we can further compute gradients $\frac{\partial Loss}{\partial (\overline{T}^{(l,uv)})^{\top}}\in\mathbb{C}^{D\times C}$ and $\frac{\partial Loss}{\partial (\overline{\textbf{g}}^{(l-1,u'v')})^{\top}}\in\mathbb{C}^{C}$ as follows.
	\begin{small}\begin{equation}\label{eq:bp_grad_single_T}
			\frac{\partial Loss}{\partial (\overline{T}^{(l,uv)})^{\top}} = \sum_{u'=0}^{M-1} \sum_{v'=0}^{N-1} \chi_{u'v'uv}  \overline{\textbf{g}}^{(l-1,u'v')}  \frac{\partial Loss}{\partial (\overline{\textbf{g}}^{(l,u'v')})^{\top}}
	\end{equation}\end{small}
	\begin{small}\begin{equation}\label{eq:bp_grad_single_g}
			\frac{\partial Loss}{\partial (\overline{\textbf{g}}^{(l-1,u'v')})^{\top}} = \frac{\partial Loss}{\partial (\overline{\textbf{g}}^{(l,u'v')})^{\top}} \overline{T}^{(l,u'v')}
	\end{equation}\end{small}

	Furthermore, \textbf{we extend the above proof of a single convolutional layer to a network with $L$ cascaded convolutional layers.} Let $\textbf{g}^{(l,u'v')}$ denote the frequency component at the frequency $[u',v']$ of the $l$-th layer's output feature, and let $T^{(l,uv)}$ the matrix computed by the $l$-th layer's convolutional weights. Then, according to Equation~(\ref{eq:bp_grad_single_g}), the gradient \emph{w.r.t.} $\overline{\textbf{g}}^{(l,u'v')}$ can be computed as follows.

	\begin{small}\begin{equation}\label{eq:mul_bp}
			\begin{aligned}
				\frac{\partial Loss}{\partial (\overline{\textbf{g}}^{(l,u'v')})^{T}} 
				&= \frac{\partial Loss}{\partial (\overline{\textbf{g}}^{(L,u'v')})^{T}} \overline{T}^{(L,u'v')} \cdots \overline{T}^{(l+1,u'v')}\\
				&= \frac{\partial Loss}{\partial (\overline{\textbf{g}}^{(L,u'v')})^{T}} \overline{\mathbb{T}}^{(u'v')(L:l+1)} \\
			\end{aligned}
	\end{equation}\end{small}

	According to Equation~(\ref{eq:bp_grad_single_T}), the gradient \emph{w.r.t.} $\overline{T}^{(l,uv)}$ can be computed as follows.
	\begin{small}\begin{equation}\label{eq:patial_T}
			\begin{aligned}
				\frac{\partial Loss}{\partial (\overline{T}^{(l,uv)})^{\top}} 
				&= \sum_{u'=0}^{M-1} \sum_{v'=0}^{N-1} \chi_{u'v'uv}  \overline{\textbf{g}}^{(l-1,u'v')}  \frac{\partial Loss}{\partial (\overline{\textbf{g}}^{(l,u'v')})^{\top}}\\
				&= \sum_{u'=0}^{M-1} \sum_{v'=0}^{N-1} \chi_{u'v'uv} (\overline{\mathbb{T}}^{(u'v')(l-1:1)} \overline{\textbf{g}}^{(0,u'v')}+ \overline{\boldsymbol{\beta}}'\delta_{u'v'})\frac{\partial Loss}{\partial (\overline{\textbf{g}}^{(L,u'v')})^{\top}} \overline{\mathbb{T}}^{(u'v')(L:l+1)} {\rm \slash \slash  Corollary~\ref{co1_entire}} \\
				& 
				\quad \slash \slash \text{ Let } \textbf{g}^{(uv)} = \textbf{g}^{(0,uv)};\textbf{h}^{(uv)} = \textbf{g}^{(L,uv)} \\
				&= \sum_{u'=0}^{M-1} \sum_{v'=0}^{N-1} \chi_{u'v'uv} (\overline{\mathbb{T}}^{(u'v')(l-1:1)} \overline{\textbf{g}}^{(u'v')}+ \overline{\boldsymbol{\beta}}'\delta_{u'v'})\frac{\partial Loss}{\partial (\overline{\textbf{h}}^{(u'v')})^{\top}} \overline{\mathbb{T}}^{(u'v')(L:l+1)}
			\end{aligned}
	\end{equation}\end{small}
	
	Let us use the gradient descent algorithm to update the convlutional weight $W^{(l)[\text{ker}=d]}_{c}|_{n}$ of the $n$-th epoch, the updated frequency spectrum $W^{(l)[\text{ker}=d]}_{c}|_{n+1}$ can be computed as follows.
	
	\begin{equation*}
		\forall t,s,\quad W^{(l)[\text{ker}=d]}_{cts}|_{n+1} = W^{(l)[\text{ker}=d]}_{cts}|_{n} - \eta \cdot \frac{\partial Loss}{\partial \overline{W}_{cts}^{(l)[\text{ker=d}]}}
	\end{equation*}
	
	where $\eta$ is the learning rate. Then, the updated frequency spectrum $T^{(l,uv)}|_{n+1}$ computed based on Equation~(\ref{eq:dft_grad}) is given as follows.
	
	\begin{equation*}
		\begin{aligned}
			\Delta T_{dc}^{(l,uv)} 
			&= T_{dc}^{(l,uv)}|_{n+1} - T_{dc}^{(l,uv)}|_{n} \\
			&=  \sum_{t=0}^{K-1}\sum_{s=0}^{K-1} W_{cts}^{(l)[\text{ker}=d]}|_{n+1}e^{i(\frac{ut}{M}+\frac{vs}{N})2\pi} - T_{dc}^{(l,uv)}|_{n}
			\quad {\rm \slash \slash Equation~(\ref{eq:dft})}\\
			&= \sum_{t=0}^{K-1}\sum_{s=0}^{K-1} (W^{(l)[\text{ker}=d]}_{cts}|_{n} - \eta \cdot \frac{\partial Loss}{\partial \overline{W}_{cts}^{(l)[\text{ker=d}]}})  e^{i(\frac{ut}{M}+\frac{vs}{N})2\pi} - T_{dc}^{(l,uv)}|_{n} \\
			&= (\sum_{t=0}^{K-1}\sum_{s=0}^{K-1} W^{(l)[\text{ker}=d]}_{cts}|_{n} e^{i(\frac{ut}{M}+\frac{vs}{N})2\pi}- T_{dc}^{(l,uv)}|_{n}) - \eta \sum_{t=0}^{K-1}\sum_{s=0}^{K-1} \frac{\partial Loss}{\partial \overline{W}_{cts}^{(l)[\text{ker=d}]}}  e^{i(\frac{ut}{M}+\frac{vs}{N})2\pi} \\
			&=  - \eta \sum_{t=0}^{K-1}\sum_{s=0}^{K-1} \frac{\partial Loss}{\partial \overline{W}_{cts}^{(l)[\text{ker=d}]}}  e^{i(\frac{ut}{M}+\frac{vs}{N})2\pi} 
			\quad {\rm \slash \slash Equation~(\ref{eq:dft})}\\
			&= - \eta MN \frac{\partial Loss}{\partial \overline{T}_{dc}^{(l,uv)}}
			\quad {\rm \slash \slash Equation~(\ref{eq:dft_grad})}\\
		\end{aligned}
	\end{equation*}
	
	Therefore, we prove that any step on $W^{(l)[\textit{ker}=d]}_{cts}$  equals to $MN$ step on $T_{dc}^{(uv)}$. In this way, pull Equation~(\ref{eq:patial_T}) in the change of $T^{(l,uv)}$ can be computed as follows.
	\begin{small}
		\begin{equation}
			\left(\Delta T^{(l,uv)}\right)^{\top} = -\eta MN
			\sum_{u'=0}^{M-1}\sum_{v'=0}^{N-1}\chi_{u'v'uv} \left( \overline{\boldsymbol{\mathbb{T}}}^{(u'v')(l-1:1)} \overline{\textbf{g}}^{(u'v')} + \delta_{u'v'}\overline{\boldsymbol{\beta}}' \right) \frac{\partial \textit{Loss}}{\partial (\overline{\textbf{h}}^{(u'v')})^{\top}}  \overline{\boldsymbol{\mathbb{T}}}^{(u'v')(L:l+1)}
		\end{equation}
	\end{small}

\end{proof}

\subsection{Proofs of Assumption~\ref{ass:T} and Theorem~\ref{th:TTT}}\label{app:sec:th4}
We prove Assumption~\ref{ass:T} in the main paper, as follows.

\begin{proof}
	Given an initialized, cascaded, convolutional decoder network with $L$ convolutional layers, let us focus on the behavior of the decoder network in early epochs of training. 
	We notice that each element in the matrix {\small$T^{(l,uv)}$} is exclusively determined by the $c$-th channel of the $d$-th kenel {\small$W^{(l)[\textit{ker}=d]}_{c,1:K,1:K}\in\mathbb{R}^{K\times K}$} according to Theorem~\ref{th2_layerwise}. Because parameters in {\small$W^{(l)}$} in the decoder network are set to random noises, we can consider that all elements in {\small$T^{(l,uv)}$} irrelevant to each other, \emph{i.e.}, {\small$\forall d\ne d',c\ne c',T_{dc}^{(l,uv)}$} is irrelevant to {\small$T_{d'c'}^{(l,uv)}$}. Similarly, since different layers' parameters {\small$W^{(l)}$} are irrelevant to each other in the initialized decoder network, we can consider that elements in different layers' {\small$T^{(l,uv)}$} irrelevant to each other, \emph{i.e.}, $\forall l\ne l'$, elements in {\small$T^{(l,uv)}$} and elements in {\small$T^{(l',uv)}$} are irrelevant to each other. Moreover, since the early training of a DNN mainly modifies a few parameters according to the lottery ticket hypothesis \citep{frankle2018lottery}, we can still assume such irrelevant relationships in early epochs, as follows.
\end{proof}

Then, we prove Theorem~\ref{th:TTT}, as follows.

\begin{proof}
	
	\textbf{We first prove that $T^{(l,uv)}_{dc}$ follows a Gaussian distribution of complex numbers.}
	
	According to Assumption~\ref{ass:T}, each convolutional weight follows a Gaussian distribution, \emph{i.e.}, $W^{\text{ker}=d}_{cts} \sim \mathcal{N}(\mu_{l},\sigma^{2}_{l})$. For the convenience of proving, let us extend $W^{\text{ker}=d}_{cts}$ into an complex number. In this way, $W^{\text{ker}=d}_{cts}$ follows a Gaussian distribution of complex numbers, \emph{i.e.}, $W^{ker=d}_{cts} \sim Complex\mathcal{N}(\mu_{l},\sigma^{2}_{l},0)$.
	
	Previous studies~\cite{tse2005fundamentals} proved that given $N$ complex numbers, if each complex number follows a Gaussian distribution, then the linear summation of these $N$ complex numbers also follows a Gaussian distribution of complex numbers. Since $T^{(l,uv)}_{dc}$ is a linear combination of $\forall t,s,W_{cts}^{(l)[\text{ker}=d]}$, $T^{(l,uv)}_{dc}$ also follows a Gaussian distribution of complex numbers as follows.
	
	\begin{equation*}
		\forall d,c \quad T_{dc}^{(l,uv)} \sim \textit{Complex}\mathcal{N}(\hat{\mu},\hat{\sigma}^2,r)
	\end{equation*}
	
	where
	\begin{equation*}
		\begin{aligned}
			\mu
			&= \mathbb{E}[T^{(l,uv)}_{dc} ] 
			\quad {\rm \slash \slash By\ definetion\ of\ \mu}\\
			&= \mathbb{E}[\sum_{t=0}^{K-1}\sum_{s=0}^{K-1}W_{cts}^{(l)[\text{ker}=d]}e^{i(\frac{ut}{M}+\frac{vs}{N})2\pi}] 
			\quad {\rm \slash \slash Equation~(\ref{eq:dft})}\\
			& {\rm \slash \slash \forall t \ne t' \ or \ s \ne s': \mathbb{E}[W_{cts}^{(l)[\text{ker}=d]}W_{ct's'}^{(l)[\text{ker=d}]}] = \mathbb{E}[W_{cts}^{(l)[\text{ker}=d]}]\mathbb{E}[W_{ct's'}^{(l)[\text{ker=d}]}]} \\
			&=\sum_{t=0}^{K-1}\sum_{s=0}^{K-1}\mathbb{E}[W_{cts}^{(l)[\text{ker}=d]}]e^{i(\frac{ut}{M}+\frac{vs}{N})2\pi} 
			\\
			&=\mu_{l} \sum_{t=0}^{K-1}\sum_{s=0}^{K-1} e^{i(\frac{ut}{M}+\frac{vs}{N})2\pi} 
			\quad {\rm \slash \slash \mathbb{E}[W_{cts}^{(l)[\text{ker}=d]}]= \mu_{l} }\\
			& \quad  \slash \slash Let\ R_{uv} = \sum_{t=0}^{K-1}\sum_{s=0}^{K-1}e^{i(\frac{ut}{M}+\frac{vs}{N})2\pi}\\
			&= \mu_{l}R_{uv}\\
		\end{aligned}
	\end{equation*}
	
	\begin{equation*}
		\begin{aligned}
			\sigma^{2}
			&= \mathbb{E}[(T^{(l,uv)}_{dc}-\mathbb{E}[T^{(l,uv)}_{dc}])\overline{(T^{(l,uv)}_{dc}-\mathbb{E}[T^{(l,uv)}_{dc}])}]
			\quad {\rm \slash \slash By\ definetion\ of\ \sigma^{2}}\\
			&= Var[T^{(l,uv)}_{dc}]\\
			&= Var[\sum_{t=0}^{K-1}\sum_{s=0}^{K-1}W_{cts}^{(l)[\text{ker}=d]}e^{i(\frac{ut}{M}+\frac{vs}{N})2\pi}] 
			\quad {\rm \slash \slash Equation~(\ref{eq:dft})}\\
			& {\rm \slash \slash \forall t \ne t' \ or \ s \ne s': \mathbb{E}[W_{cts}^{(l)[\text{ker}=d]}W_{ct's'}^{(l)[\text{ker=d}]}] = \mathbb{E}[W_{cts}^{(l)[\text{ker}=d]}]\mathbb{E}[W_{ct's'}^{(l)[\text{ker=d}]}]} \\
			&=\sum_{t=0}^{K-1}\sum_{s=0}^{K-1}Var[W_{cts}^{(l)[\text{ker}=d]}e^{i(\frac{ut}{M}+\frac{vs}{N})2\pi}] \\
			&=\sum_{t=0}^{K-1}\sum_{s=0}^{K-1}Var[W_{cts}^{(l)[\text{ker}=d]}]
			\quad  \slash \slash Var[aX] = |a|^{2} Var[X] \\
			&=\sum_{t=0}^{K-1}\sum_{s=0}^{K-1}\sigma_{l}^{2}
			\quad  \slash \slash Var[W_{cts}^{(l)[\text{ker}=d]}] = \sigma_{l}^{2} \\
			&= K^{2} \sigma_{l}^{2} 
		\end{aligned}
	\end{equation*}
	
	\begin{equation*}
		\begin{aligned}
			r
			&= \mathbb{E}[(T^{(l,uv)}_{dc}-\mathbb{E}[T^{(l,uv)}_{dc}])(T^{(l,uv)}_{dc}-\mathbb{E}[T^{(l,uv)}_{dc}])]
			\quad {\rm \slash \slash By\ definetion\ of\ r}\\
			&= C[T^{(l,uv)}_{dc}]
			\quad {\rm \slash \slash Define\ C[\textbf{X}] = \mathbb{E}[(\textbf{X}-\mathbb{E}[\textbf{X}])(\textbf{X}-\mathbb{E}[\textbf{X}])]}\\
			&= C[\sum_{t=0}^{K-1}\sum_{s=0}^{K-1}W_{cts}^{(l)[\text{ker}=d]}e^{i(\frac{ut}{M}+\frac{vs}{N})2\pi}] 
			\quad {\rm \slash \slash Equation~(\ref{eq:dft})}\\
			& {\rm \slash \slash \forall t \ne t' \ or \ s \ne s': \mathbb{E}[W_{cts}^{(l)[\text{ker}=d]}W_{ct's'}^{(l)[\text{ker=d}]}] = \mathbb{E}[W_{cts}^{(l)[\text{ker}=d]}]\mathbb{E}[W_{ct's'}^{(l)[\text{ker=d}]}]}\\
			&=\sum_{t=0}^{K-1}\sum_{s=0}^{K-1}C[W_{cts}^{(l)[\text{ker}=d]}e^{i(\frac{ut}{M}+\frac{vs}{N})2\pi}] \\
			&=\sum_{t=0}^{K-1}\sum_{s=0}^{K-1}C[W_{cts}^{(l)[\text{ker}=d]}] e^{i(\frac{2ut}{M}+\frac{2vs}{N})2\pi}
			\quad  \slash \slash C[aX] = a^{2} C[X] \\
			&=\sigma_{l}^{2} \sum_{t=0}^{K-1}\sum_{s=0}^{K-1} e^{i(\frac{2ut}{M}+\frac{2vs}{N})2\pi}
			\quad  \slash \slash Var[W_{cts}^{(l)[\text{ker}=d]}] = \sigma_{l}^{2} \\
			&= \sigma_{l}^{2} R_{2u,2v} 
			\quad  \slash \slash \ R_{uv} = \sum_{t=0}^{K-1}\sum_{s=0}^{K-1}e^{i(\frac{ut}{M}+\frac{vs}{N})2\pi}\\
		\end{aligned}
	\end{equation*}
	
	Finally, let us consider the value of $R_{uv}$.
	\begin{equation*}
		\begin{aligned}
			R_{uv} 
			&= \sum_{t=0}^{K-1}\sum_{s=0}^{K-1}e^{i(\frac{ut}{M}+\frac{vs}{N})2\pi} \\
			&= \sum_{t=0}^{K-1}e^{i(\frac{2u\pi}{M})t}\sum_{s=0}^{K-1}e^{i(\frac{2v\pi}{N})s} \\
			&= \frac{\sin(\frac{Ku}{M}\pi)}{\sin(\frac{u}{M}\pi)} \cdot \frac{\sin(\frac{Kv}{N}\pi)}{\sin(\frac{v}{N}\pi)} \cdot e^{i(\frac{(K-1)u}{M}+\frac{(K-1)v}{N})\pi}  \quad {\rm \slash \slash According\ to\ Equation~(\ref{eq:sum_of_e})} \\
		\end{aligned}
	\end{equation*}
	
	Therefore, we prove that \textbf{$T^{(l,uv)}_{dc}$ follows a Gaussian distribution of complex numbers.}
	\begin{small}
		\begin{equation}\label{app:eq:T_distribution}
			\forall d,c \quad T_{dc}^{(l,uv)} \sim \textit{Complex}\mathcal{N}(\hat{\mu}=\mu_lR_{uv},\hat{\sigma}^2=K^2\sigma_l^2,r=\sigma_l^2R_{2u,2v})
		\end{equation}
		\begin{equation*}
			\text{s.t.}\ \ R_{uv} = 
			\frac{\sin(uK\pi/M)}{\sin(u\pi/M)} \frac{\sin(vK\pi/N)}{\sin(v\pi/N)} e^{i(\frac{(K-1)u}{M}+\frac{(K-1)v}{N})\pi}
		\end{equation*}
	\end{small}
	
	\textbf{Then, we prove Theorem~\ref{th:TTT} as follows.}

	According to Equation (\ref{app:eq:T_distribution}), $\forall d,c,l: \mathbb{E}[T^{(l,uv)}_{dc}] = \mu_{l}R_{uv}, Var[T^{(l,uv)}_{dc}] = K^{2}\sigma^{2}_{l}$.
	
	\begin{equation}\label{eq:som}
		\begin{aligned}
			SOM(T^{(l,uv)}_{dc}) 
			& = \mathbb{E}[|T^{(l,uv)}_{dc}|^{2}]\\
			& = |\mathbb{E}[T^{(l,uv)}_{dc}]|^{2} + Var[T^{(l,uv)}_{dc}] \\
			& = |\mu_{l}R_{uv}|^{2} + K^{2}\sigma^{2}_{l}
		\end{aligned}
	\end{equation}
	
	Then, we have
	\begin{equation*}
		\begin{aligned}
			\log(SOM(\mathbb{T}^{(uv)(L:1)}))
			& = \log(\mathbb{E}[|\mathbb{T}^{(uv)(L:1)}|^{2}])\\
			& = \log(\mathbb{E}[|T^{(L,uv)}\mathbb{T}^{(uv)(L-1:1)}|^{2}])\\
			& {\rm \slash \slash According\ to \ Assumption~\ref{ass:T},\ and\ C_{l} = 1} \\
			& = \log(\mathbb{E}[|T^{(L,uv)}|^{2}] \mathbb{E}[|\mathbb{T}^{(uv)(L-1:1)}|^{2}] )\\
			& = \log((|\mu_{L}R_{uv}|^{2} + K^{2}\sigma^{2}_{L}) SOM(\mathbb{T}^{(uv)(L-1:1)})  )
			\quad {\rm \slash \slash Equation~(\ref{eq:som})}\\
			& = \log(\prod_{l=1}^{L}|\mu_{l}R_{uv}|^{2} + K^{2}\sigma^{2}_{l}) \\ 
			& = \sum_{l=1}^{L} \log(|\mu_{l}R_{uv}|^{2} + K^{2}\sigma^{2}_{l}) 
		\end{aligned}
	\end{equation*}
	
\end{proof}

\textbf{For the more general case that each convolutional kernel contains more than one channel, \emph{i.e.}, {\small$\forall l,C_l>1$}, the {\small$\textit{SOM}(\boldsymbol{\mathbb{T}}^{(uv)(L:1)})$} also approximately exponentially increases along with the depth of the network with a quite complicated analytic solution, as proved below.} Note that the following proof is based Assumption~\ref{ass:T}. Besides, we further assume that all elements in $\mathbb{T}^{(uv)(l:1)}$ are independent with each other. \emph{I.e.}, $\forall d \ne d';c \ne c', \mathbb{E}[\mathbb{T}_{dc}^{(uv)(l:1)}\mathbb{T}_{d'c'}^{(uv)(l:1)}] = \mathbb{E}[\mathbb{T}_{dc}^{(uv)(l:1)}]\mathbb{E}[\mathbb{T}_{d'c'}^{(uv)(l:1)}] $.

\begin{proof}
	According to Equation~(\ref{app:eq:T_distribution}), all elements in ${T}^{(l,uv)}$ follow the same Gaussian distribution. Therefore, we have
	\begin{equation}\label{eq:T_mean}
		\begin{aligned}
			\mathbb{E}[T^{(l,uv)}] 
			&= \mathbb{E}[T^{(l,uv)}_{dc}] \textbf{1}_{(C_{l}\times C_{l-1})}\\
			&= \mu_{l}R_{uv} \textbf{1}_{(C_{l}\times C_{l-1})} \\
		\end{aligned}
	\end{equation}
	
	and we have 
	\begin{equation}\label{eq:T_som}
		\begin{aligned}
			SOM({T}^{(l,uv)}) 
			&= SOM({T}^{(l,uv)}_{dc}) \textbf{1}_{(C_{l}\times C_{l-1})}\\
			&= (|\mu_{l}R_{uv}|^{2} + K^{2}\sigma^{2}_{l})\textbf{1}_{(C_{l}\times C_{l-1})} 
		\end{aligned}
	\end{equation}
	
	Let us first consider the expectation of $\mathbb{T}^{(uv)(L:1)}$ as follows.
	
	\begin{equation}\label{eq:bbT_mean}
		\begin{aligned}
			\mathbb{E}[\mathbb{T}^{(uv)(L:1)}] 
			&= \mathbb{E}[T^{(L,uv)}\mathbb{T}^{(uv)(L-1:1)}] \\
			&= (C_{L-1}\mathbb{E}[T_{dc}^{(L,uv)}]\mathbb{E}[\mathbb{T}_{dc}^{(uv)(L-1:1)}])\textbf{1}_{(C_{L}\times C_{0})} 
			\quad {\rm \slash \slash Assumption~\ref{ass:T}, Equation~(\ref{eq:T_mean}})\\
			&= (C_{L-1} \mu_{l} R_{uv} \mathbb{E}[\mathbb{T}_{dc}^{(uv)(L-1:1)}])\textbf{1}_{(C_{L}\times C_{0})} 
			\quad {\rm \slash \slash Equation\ (\ref{app:eq:T_distribution})} \\
			&= \left(\frac{1}{C_{L}} \prod_{l=1}^{L} C_{l}\mu_{l} R_{uv}\right)\textbf{1}_{(C_{L}\times C_{0})} 
			\quad {\rm \slash \slash Assumption~\ref{ass:T}} \\
		\end{aligned}
	\end{equation}
	
	Then, we have
	\begin{small}
		\begin{equation}\label{eq:bbT_som}
			\begin{aligned}
				&SOM(\mathbb{T}^{(uv)(L:1)}) \\
				&= \mathbb{E}[|\mathbb{T}^{(uv)(L:1)}|^{2}]\\
				&= \mathbb{E}[|T^{(L,uv)}\mathbb{T}^{(uv)(L-1:1)}|^{2}]\\
				&= (C_{L-1} SOM(T_{dc}^{(L,uv)})SOM(\mathbb{T}^{(uv)(L-1:1)}_{dc}) + C_{L-1}(C_{L-1}-1)|\mathbb{E}[T_{dc}^{(L,uv)}]\mathbb{E}[\mathbb{T}_{dc}^{(uv)(L-1:1)}]|^{2})\textbf{1}_{(C_{L}\times C_{0})} \\
				& {\rm \slash \slash According\ to\ Assumption~\ref{ass:T}\ and\ Equation~(\ref{eq:T_som}), } \\
				& {\rm \slash \slash  we\ further\ Assume\ \forall d \ne d';c \ne c', \mathbb{E}[\mathbb{T}_{dc}^{(uv)(l:1)}\mathbb{T}_{d'c'}^{(uv)(l:1)}] = \mathbb{E}[\mathbb{T}_{dc}^{(uv)(l:1)}]\mathbb{E}[\mathbb{T}_{d'c'}^{(uv)(l:1)}] }\\
				&= (C_{L-1} (|\mu_{L}R_{uv}|^{2} + K^{2}\sigma^{2}_{L}) SOM(\mathbb{T}^{(uv)(L-1:1)}_{dc}) + \frac{C_{L-1}-1}{C_{L-1}}|\mathbb{E}[\mathbb{T}_{dc}^{(uv)(L:1)}]|^{2})\textbf{1}_{(C_{L}\times C_{0})} \\
				& {\rm \slash \slash According\ to\  Equation~(\ref{eq:som}),\ Equation~(\ref{eq:bbT_mean})} \\
				&= \bigg(\frac{1}{C_{L}}\prod_{l=1}^{L}C_{l}(|\mu_{l} R_{u,v}|^{2} + (K\sigma_{l})^{2}) + \sum_{l=2}^{L} \frac{C_{l-1}-1}{C_{l-1}} |\frac{1}{C_{l}}\prod_{k=1}^{l}C_{k}\mu_{k} R_{u,v}|^{2} \prod_{j=l+1}^{L}C_{j-1} \Big(|\mu_{j} R_{u,v}|^{2}  + (K\sigma_{j})^{2}\Big) \bigg) \textbf{1}_{C_{L} \times C_{0}}
			\end{aligned}
		\end{equation}
	\end{small}
	
	Therefore, we prove that for the more general case that {\small$\forall l,C_l>1$}, the second-order moment {\small$\textit{SOM}(\boldsymbol{\mathbb{T}}^{(uv)(L:1)})$} also approximately exponentially increases along with the depth of the network.
	
\end{proof}

\subsection{Proof of Theorem~\ref{th:zero_padding}}\label{app:sec:th5}
In this section, we prove Theorem \ref{th:zero_padding} in the main paper, as follows.

\begin{proof}
	
	\begin{equation}\label{EG}
		\begin{aligned}
			\mathbb{E}_{F^{(c)}}[G_{uv}^{(c)}]
			&=\mathbb{E}[ \sum_{m=0}^{M-1}\sum_{n=0}^{N-1} F^{(c)}_{mn} e^{-i(\frac{um}{M}+\frac{vn}{N})2\pi} ]
			\quad {\rm \slash \slash Equation~(\ref{eq:dft})} \\
			&=\sum_{m=0}^{M-1}\sum_{n=0}^{N-1} \mathbb{E}[ F^{(c)}_{mn} ]e^{-i(\frac{um}{M}+\frac{vn}{N})2\pi}  \\ 
			&= a \sum_{m=0}^{M-1}\sum_{n=0}^{N-1}e^{-i(\frac{um}{M}+\frac{vn}{N})2\pi} 
			\quad {\rm \slash \slash F^{(c)}_{mn} \sim \mathcal{N}(a,\sigma^{2})} \\
			& = a MN \delta_{uv};0\le u<M,0\le  v<N
			\quad {\rm \slash \slash Equation~(\ref{eq:delta_uv})} \\
		\end{aligned}
	\end{equation}
	
	\begin{equation}\label{EH}
		\begin{aligned}
			&\mathbb{E}_{F^{(c)}}[H_{uv}^{(c)}] \\
			&=\mathbb{E}_{F^{(c)}}[ \sum_{m=0}^{M'-1}\sum_{n=0}^{N'-1} \tilde{F}^{(c)}_{mn} e^{-i(\frac{um}{M'}+\frac{vn}{N'})2\pi} ]
			\quad {\rm \slash \slash Equation~(\ref{eq:dft})} \\
			&= \mathbb{E}_{F^{(c)}}[\sum_{m=0}^{M-1}\sum_{n=0}^{N-1} F^{(c)}_{mn} e^{-i(\frac{um}{M'}+\frac{vn}{N'})2\pi} ] \\
			&= a \sum_{m=0}^{M-1}\sum_{n=0}^{N-1}e^{-i(\frac{um}{M'}+\frac{vn}{N'})2\pi} 
			\quad {\rm \slash \slash F^{(c)}_{mn} \sim \mathcal{N}(a,\sigma^{2})} \\
			&= a\frac{\sin(\frac{Mu}{M'}\pi)}{\sin(\frac{u}{M'}\pi)}\frac{\sin(\frac{Nv}{N'}\pi)}{\sin(\frac{v}{N'}\pi)}e^{-i(\frac{(M-1)u}{M'}+\frac{(N-1)v}{N'})\pi};0\le u<M',0\le v<N'
			\quad {\rm \slash \slash Equation~(\ref{eq:sum_of_e})} \\
		\end{aligned}
	\end{equation}

	\begin{small}\begin{equation}\label{VarG}
			\begin{aligned}
				\textit{Var}_{F^{(c)}}[G_{uv}^{(c)}] 
				&= \textit{Var}[\sum_{m=0}^{M-1}\sum_{n=0}^{N-1} F_{mn}^{(c)}e^{-i(\frac{um}{M}+\frac{vn}{N})2\pi}]\\
				&=\sum_{m=0}^{M-1}\sum_{n=0}^{N-1} \textit{Var}[F_{mn}^{(c)}e^{-i(\frac{um}{M}+\frac{vn}{N})2\pi}]
				\quad \slash\slash \forall m,n;
				F_{mn}^{(c)} \text{\ is i.i.d}
				\\
				&= \sum_{m=0}^{M-1}\sum_{n=0}^{N-1} \textit{Var}[F_{mn}^{(c)}]\\ 
				&= \sum_{m=0}^{M-1}\sum_{n=0}^{N-1} \sigma^{2} \quad \slash\slash F^{(c)}_{mn}\sim \mathcal{N}(a,\sigma^{2})\\
				&= MN \sigma^{2}
			\end{aligned}
		\end{equation}
	\end{small}
	
	\begin{small}\begin{equation}\label{VarH}
			\begin{aligned}
				\textit{Var}_{F^{(c)}}[H_{uv}^{(c)}] 
				&= \textit{Var}[\sum_{m=0}^{M'-1}\sum_{n=0}^{N'-1} \tilde{F}_{mn}^{(c)}e^{-i(\frac{um}{M'}+\frac{vn}{N'})2\pi}]\\
				&= \textit{Var}[\sum_{m=0}^{M-1}\sum_{n=0}^{N-1} F_{mn}^{(c)}e^{-i(\frac{um}{M'}+\frac{vn}{N'})2\pi}]\\
				&=\sum_{m=0}^{M-1}\sum_{n=0}^{N-1} \textit{Var}[F_{mn}^{(c)}e^{-i(\frac{um}{M'}+\frac{vn}{N'})2\pi}]
				\quad \slash\slash \forall m,n;
				F_{mn}^{(c)} \text{\ is i.i.d}
				\\
				&= \sum_{m=0}^{M-1}\sum_{n=0}^{N-1} \textit{Var}[F_{mn}^{(c)}]\\
				&= \sum_{m=0}^{M-1}\sum_{n=0}^{N-1} \sigma^{2}
				\quad \slash\slash F^{(c)}_{mn}\sim \mathcal{N}(a,\sigma^{2})\\
				&= MN \sigma^{2};0\le u<M',0\le v<N'
			\end{aligned}
		\end{equation}
	\end{small}
	
	When $0\le u<M,0\le v<N$:
	
	\begin{small}\begin{equation}
			\begin{aligned}
				\textit{SOM}(H_{uv}^{(c)}) - \textit{SOM}(G_{uv}^{(c)})
				&= |\mathbb{E}[H_{uv}^{(c)}]|^{2}  + \textit{Var}(H_{uv}^{(c)}) 
				- (|\mathbb{E}[G_{uv}^{(c)}]|^{2}  + \textit{Var}(G_{uv}^{(c)}))\\
				&= (a\tau_{uv})^{2}  + MN\sigma^{2} - (aMN)^{2}\delta_{uv} - MN\sigma^{2} \\
				&// \text{According to Equation~(\ref{EG}),Equation~(\ref{EH}),Equation~(\ref{VarG})  and Equation~(\ref{VarH})}\\
				&= (a\tau_{uv})^{2} - (aMN)^{2}\delta_{uv} \\
			\end{aligned}
		\end{equation}
	\end{small}
	
	Therefore, We prove that $\forall 0\le u<M,0\le v<N,u+v\ne 0$
	\begin{small}\begin{equation}
			\begin{aligned}
				\textit{SOM}(H_{uv}^{(c)}) - \textit{SOM}(G_{uv}^{(c)})
				&= (a\tau_{uv})^{2} \\
			\end{aligned}
		\end{equation}
	\end{small}
	
	When $u=v=0$:
	\begin{small}\begin{equation}
			\begin{aligned}
				\textit{SOM}(H_{uv}^{(c)}) &= \textit{SOM}(G_{uv}^{(c)})\\
			\end{aligned}
		\end{equation}
	\end{small}

\end{proof}

\subsection{Proof of Theorem~\ref{th_upconv}}\label{app:sec:th6}
In this section, we prove Theorem~\ref{th_upconv} in the main paper, as follows.

\begin{proof}
	\begin{equation}\label{eq:up_in}
		G_{uv}^{(c)} =\sum_{m=0}^{M_{0}-1}\sum_{n=0}^{N_{0}-1} F_{mn}^{(c)} e^{-i(\frac{um}{M_{0}}+\frac{vn}{N_{0}})2\pi}
		\quad {\rm \slash \slash  Equation~(\ref{eq:dft})} \\
	\end{equation}
	
	\begin{small}\begin{equation}\label{eq:up_out}
			\begin{aligned}
				H_{u+(s-1)M_{0},v+(t-1)N_{0}}^{(c)}
				&= \sum_{m=0}^{M-1}\sum_{n=0}^{N-1} \tilde{F}_{mn}^{(c)} e^{-i(\frac{(u+(s-1)M_{0})m}{M}+\frac{(v+(t-1)N_{0})n}{N})2\pi}
				\quad {\rm \slash \slash  Equation~(\ref{eq:dft})} \\
				&= \sum_{m=0}^{M_{0} -1}\sum_{n=0}^{N_{0}-1} F_{mn}^{(c)} e^{-i(\frac{(u+(s-1)M_{0})(m\cdot ratio)}{M}+\frac{(v+(t-1)N_{0})(n \cdot ratio)}{N})2\pi} \\
				&= \sum_{m=0}^{M_{0} -1}\sum_{n=0}^{N_{0}-1} F_{mn}^{(c)} e^{-i(\frac{(u+(s-1)M_{0})m}{M/ratio}+\frac{(v+(t-1)N_{0})n}{N/ratio})2\pi}\\
				& \slash \slash  M=M_{0} \cdot ratio;N= N_{0}\cdot ratio \\
				&= \sum_{m=0}^{M_{0} -1}\sum_{n=0}^{N_{0}-1} F_{mn}^{(c)} e^{-i(\frac{(u+(s-1)M_{0})m}{M_{0}}+\frac{(v+(t-1)N_{0})n}{N_{0}})2\pi}
				\\
				&= \sum_{m=0}^{M_{0} -1}\sum_{n=0}^{N_{0}-1} F_{mn}^{(c)} e^{-i(\frac{um}{M_{0}}+\frac{vn}{N_{0}})2\pi}\cdot e^{-i((s-1)m+(t-1)n)2\pi}\\
				&= \sum_{m=0}^{M_{0} -1}\sum_{n=0}^{N_{0}-1} F_{mn}^{(c)} e^{-i(\frac{um}{M_{0}}+\frac{vn}{N_{0}})2\pi}
				\quad {\rm \slash \slash  s,t \in \mathcal{Z}} \\
				&= G_{uv}^{(c)} 
				\quad {\rm \slash \slash Equation~(\ref{eq:up_in})} \\
			\end{aligned}
	\end{equation}\end{small}
	
	Therefore we prove that:
	\begin{small}
		\begin{equation}
			\forall c,u,v,\quad
			H_{u+(s-1)M_0,v+(t-1)N_0}^{(c)}=G_{uv}^{(c)} \qquad \quad \text{s.t.}\ \ s=1,\ldots,M/M_0;\ t=1,\ldots,N/N_0	
		\end{equation}
	\end{small}
\end{proof}

\subsection{Proof of Theorem~\ref{th:cost}}\label{app:sec:difficulty}

In this section, we prove Theorem \ref{th:cost} in the main paper. Recall that we consider the input {\small$x\in\mathbb{R}^{M\times N}$} with a single channel to simplify the proof. 
Then, {\small$G\in \mathbb{C}^{M\times N}$}, {\small$H\in \mathbb{C}^{M\times N}$}, and {\small$H^*\in \mathbb{C}^{M\times N}$} denote spectrums of the input, the output, and the target image to fit, respectively. Specifically, {\small$H^*_{u_1v_1} = (1-A)G^{(u_1v_1)}$}, and {\small$H^*_{u_2v_2} = (1+\overline{A})G^{(u_2v_2)}$}, where {\small$A=\alpha e^{i\phi}$} and {\small$\overline{A}$} denotes the conjugate of {\small$A$}; {\small$\alpha>0$}; {\small$\phi<\pi/2$}.
Theorem~\ref{th:cost} proves a case that weights {\small$\textbf{W}$} is optimized to satisfy {\small$H_{u_1v_1}=H^*_{u_1v_1}$} and {\small$H_{u_2v_2}=H^*_{u_2v_2}$} simultaneously.

\begin{proof}
	
	Let us provide specific constrains as follows, so as to make the auto-encoder learnable (i.e., ensuring {\small$\Delta\textbf{W}$} is real-valued) and make the objective function can reach zero by a single step of gradient descent.
	\begin{equation} \label{a1}
		\lambda_1 = \frac{1}{|G_{u_1v_1}|^2}; \quad\quad \lambda_2 = \frac{1}{|G_{u_2v_2}|^2}
	\end{equation}
	\begin{equation}\label{a2}
		A=\alpha e^{i\phi}, \quad \textit{s.t.} \ \phi = (\frac{(K-1)(u_2-u_1)}{M} + \frac{(K-1)(v_2-v_1)}{N})\frac{\pi}{2}
	\end{equation}
	\begin{equation} \label{a3}
		\mathbb{T}^{(u_2v_2)(L:1)} = \mathbb{T}^{(u_1v_1)(L:1)}  = 1
	\end{equation}
	\begin{equation} \label{a4}
		\sum_{l=1}^{L}\left \| \mathbb{T}^{(u_1v_1)(L:l+1)} \right \|^{2}\left \| \mathbb{T}^{(u_1v_1)(l-1:1)} \right \|^{2} = 
		\sum_{l=1}^{L}\left \| \mathbb{T}^{(u_2v_2)(L:l+1)} \right \|^{2}\left \| \mathbb{T}^{(u_2v_2)(l-1:1)} \right \|^{2}
	\end{equation}

	If the objective function can reach zero (\emph{i.e.}, {\small$H_{u_1v_1}=H^*_{u_1v_1}$} and {\small$H_{u_2v_2}=H^*_{u_2v_2}$}) by a single step of gradient descent, the change of weights $\Delta \textbf{W}$ can be rewritten as $\eta \frac{\textit{Loss}}{\partial \textbf{W}}$, where $\eta$ denotes the learning rate, $\frac{\textit{Loss}}{\partial \textbf{W}}$ denotes the gradient on weights. Then, we will prove the formulations of $\eta$ and $\frac{\textit{Loss}}{\partial \textbf{W}}$.
	
	According to Corollary~\ref{co1_entire}, the network output $H_{uv}$ can be computed as follows.
	\begin{equation}
		H_{uv} = \mathbb{T}^{(uv)(L:1)} G_{uv},
	\end{equation}
	
	where $\mathbb{T}^{(uv)(L:1)} = T^{(L,uv)}T^{(L-1,uv)}\cdots T^{(1,uv)} \in \mathbb{C}^{1\times 1}$.
	
	According to Corollary~\ref{co2_back}, after a single step of gradient descent, the change of $\mathbb{T}^{(uv)(L:1)}$ can be computed as follows.
	\begin{small}
		\begin{equation}
			\begin{aligned}
				\Delta &\mathbb{T}^{(uv)(L:1)} \\
				& =  \Delta (T^{(L,uv)}T^{(L-1,uv)}\cdots T^{(1,uv)}) \\
				&\approx  \sum_{l=1}^{L} \mathbb{T}^{(L:l+1)(uv)} \Delta T^{(l,uv)} \mathbb{T}^{(l-1:1)(uv)} 
				\quad {\rm \slash \slash \text{First order approximation}}\\
				&=-\eta MN \sum_{l=1}^{L} \mathbb{T}^{(L:l+1)(uv)}( \sum_{u'v'}\chi_{u'v'uv}  \overline{\mathbb{T}}^{(l-1:1)(uv)} \overline{G}_{uv} \frac{\partial Loss}{\partial \overline{H}_{uv}^{T}} \mathbb{T}^{(L:l-1)(uv)})^T \mathbb{T}^{(l-1:1)(uv)}\\
				&= -2\eta MN (\lambda_1\chi_{u_1v_1uv} \overline{G}_{u_1v_1}(H_{u_1v_1}-H^*_{u_1v_1}) + \lambda_2\chi_{u_2v_2uv} \overline{G}_{u_2v_2}(H_{u_2v_2}-H^*_{u_2v_2})) \sum_{l=1}^{L}\left \|\mathbb{T}^{(L:l+1)(uv)}\right \|^{2}  \left \|\mathbb{T}^{(l-1:1)(uv)}\right \|^{2} \\
				& 
				{\rm \slash \slash According\ to\ Equation~(\ref{a1})}\\
				&= -2\eta MN (\chi_{u_1v_1uv} (\mathbb{T}^{(L:1)(u_1v_1)}-\frac{H^*_{u_1v_1}}{G_{u_1v_1}}) + \chi_{u_2v_2uv} (\mathbb{T}^{(L:1)(u_2v_2)}-\frac{H^*_{u_2v_2}}{G_{u_2v_2}})) \sum_{l=1}^{L}\left \|\mathbb{T}^{(L:l+1)(uv)}\right \|^{2}  \left \|\mathbb{T}^{(l-1:1)(uv)}\right \|^{2} \\
				&= -2\eta MN  (A\chi_{u_1v_1uv}-\overline{A}\chi_{u_2v_2uv} ) \sum_{l=1}^{L}\left \|\mathbb{T}^{(L:l+1)(uv)}\right \|^{2}  \left \|\mathbb{T}^{(l-1:1)(uv)}\right \|^{2} {\rm \slash \slash Equation~(\ref{a3})}\\
			\end{aligned}
		\end{equation}
	\end{small}
	
	For frequencies $[u_1,v_1]$ and $[u_2,v_2]$, we have:
	
	\begin{small}\begin{equation}
			\Delta \mathbb{T}^{(L:1)(u_1v_1)}= -2\eta MN  (A\chi_{u_1v_1u_1v_1}-\overline{A}\chi_{u_2v_2u_1v_1} ) \sum_{l=1}^{L}\left \|\mathbb{T}^{(L:l+1)(u_1v_1)}\right \|^{2}  \left \|\mathbb{T}^{(l-1:1)(u_1v_1)}\right \|^{2} 
	\end{equation}\end{small}
	\begin{small}\begin{equation}
			\Delta \mathbb{T}^{(L:1)(u_2v_2)} = -2\eta MN (A\chi_{u_1v_1u_2v_2}-\overline{A}\chi_{u_2v_2u_2v_2} ) \sum_{l=1}^{L}\left \|\mathbb{T}^{(L:l+1)(u_2v_2)}\right \|^{2}  \left \|\mathbb{T}^{(l-1:1)(u_2v_2)}\right \|^{2}
	\end{equation}\end{small}
	
	For any other frequency component $[u,v]$, where $u\ne u_1, u\ne u_2, v\ne v_1, v\ne v_2$, the change $\Delta \mathbb{T}^{(uv)(L:1)}$ can be cpmputed as the linear combination of the $\Delta \mathbb{T}^{(L:1)(u_1v_1)}$ and $\Delta \mathbb{T}^{(L:1)(u_2v_2)}$ as follows.
	
	\begin{small}\begin{equation}\label{d1}
			\Delta \mathbb{T}^{(uv)(L:1)} = a_1 \Delta \mathbb{T}^{(L:1)(u_1v_1)} + a_2 \Delta \mathbb{T}^{(L:1)(u_2v_2)}
	\end{equation}\end{small}
	where $a_1\in\mathbb{C}$ and $a_2\in\mathbb{C}$ are two complex coefficients, which keep unchanged during the learning process.
	
	On the other hand, the exact change of $\mathbb{T}^{(uv)(L:1)}$ can be directly computed given the objective function. For frequencies $[u_1,v_1]$ and $[u_2,v_2]$, we have:
	
	\begin{equation}
		\Delta \mathbb{T}^{(L:1)(u_1v_1)} = \frac{H^*_{u_1v_1}}{G_{u_1v_1}} - \mathbb{T}^{(L:1)(u_1v_1)}=-A    
	\end{equation}
	\begin{equation}
		\Delta \mathbb{T}^{(L:1)(u_2v_2)} = \frac{H^*_{u_2v_2}}{G_{u_2v_2}} - \mathbb{T}^{(L:1)(u_2v_2)}=\overline{A}    
	\end{equation}
	
	For any other frequency component $[u,v]$, the change $\Delta \mathbb{T}^{(uv)(L:1)}$ can be computed as follows:
	
	\begin{small}\begin{equation}\label{d2}
			\Delta \mathbb{T}^{(uv)(L:1)} = -a_1 A + a_2 \overline{A}
	\end{equation}\end{small}
	
	Then, combining Equation~(\ref{d1}) and Equation~(\ref{d2}), we can obtain the value of $\eta$.
	\begin{small}\begin{equation}\label{eta}
			\begin{aligned}
				\eta 
				&\propto \frac{-a_1 A + a_2 \overline{A}}{-a_1 (A\chi_{u_1v_1u_1v_1}-\overline{A}\chi_{u_2v_2u_1v_1} )  - a_2 (A\chi_{u_1v_1u_2v_2}-\overline{A}\chi_{u_2v_2u_2v_2} ) } \quad {\rm \slash \slash Equation~(\ref{a4})} \\
				&=  \frac{-a_1 A + a_2 \overline{A}}{-a_1A (\chi_{u_1v_1u_1v_1}-e^{-i2\phi }\chi_{u_2v_2u_1v_1} )  + a_2\overline{A}  (\chi_{u_2v_2u_2v_2} - e^{i2\phi }\chi_{u_1v_1u_2v_2}) } \\
				&= \frac{-a_1 A + a_2 \overline{A}}{-a_1A   + a_2\overline{A}   } \cdot \frac{MN}{K^{2} - \frac{\sin(K(u_2-u_1)\pi/M)\sin(K(v_2-v_1)\pi/N)}{\sin((u_2-u_1)\pi/M)\sin((v_2-v_1)\pi/N)}} \\
				&= \frac{MN}{K^{2} - \frac{\sin(K(u_2-u_1)\pi/M)\sin(K(v_2-v_1)\pi/N)}{\sin((u_2-u_1)\pi/M)\sin((v_2-v_1)\pi/N)}}
			\end{aligned}
	\end{equation}\end{small}
	
	And $\left \|\frac{\textit{Loss}}{\partial \textbf{W}}\right \|$ can be computed as follows. 
	
	\begin{small}\begin{equation}\label{partial}
			\begin{aligned}
				\left \|\frac{\textit{Loss}}{\partial \textbf{W}}\right \|^{2} 
				&=
				\sum_{l,d,c}\sum_{t,s} (\frac{\partial \textit{Loss}}{\partial \overline{\textbf{W}}_{cts}^{(l)[\text{ker=d}]}})^{2} \\
				&=\sum_{l,d,c} \sum_{t,s} \frac{\partial \textit{Loss}}{\partial \overline{\textbf{W}}_{cts}^{(l)[\text{ker=d}]}} \cdot \frac{\partial \textit{Loss}}{\partial \overline{\textbf{W}}_{cts}^{(l)[\text{ker=d}]}} \\
				&= \sum_{l,d,c} \sum_{t,s} \frac{\partial \textit{Loss}}{\partial \overline{\textbf{W}}_{cts}^{(l)[\text{ker=d}]}} 
				\sum_{u,v}\frac{\partial \textit{Loss}}{\partial \overline{T}^{(l,uv)}_{dc}}e^{-i(\frac{ut}{M}+\frac{vs}{N})2\pi} \\
				&= \sum_{l,d,c} \sum_{u,v}\frac{\partial \textit{Loss}}{\partial \overline{T}^{(l,uv)}_{dc}}  \sum_{t,s} \frac{\partial \textit{Loss}}{\partial \overline{\textbf{W}}_{cts}^{(l)[\text{ker=d}]}}  e^{-i(\frac{ut}{M}+\frac{vs}{N})2\pi} \\
				&= \sum_{l,d,c} \sum_{u,v}\frac{\partial \textit{Loss}}{\partial \overline{T}^{(l,uv)}_{dc}} \frac{\partial \textit{Loss}}{\partial T^{(l,uv)}_{dc}} \\ 
				&=\sum_{l,d,c}  \sum_{u,v}\frac{\partial \textit{Loss}}{\partial \overline{T}^{(l,uv)}_{dc}} \frac{\partial \textit{Loss}}{\partial T^{(l,uv)}_{dc}} \\
				&=\sum_{l,d,c} \sum_{u,v}|\frac{\partial \textit{Loss}}{\partial \overline{T}^{(l,uv)}_{dc}}|^{2} \\
				&\propto \sum_{l,d,c} \sum_{u,v} \alpha^{2} \\
				&\propto \alpha^{2}
			\end{aligned}
	\end{equation}\end{small}
	
	Therefore, the weight change $\left \|  \Delta \textbf{W} \right \|$ can be computed as follows:
	
	\begin{small}\begin{equation}
			\begin{aligned}
				\left \|  \Delta W \right \| 
				&= \eta \cdot \left \| \frac{Loss}{\partial W} \right \| \\
				& \propto \eta \cdot \sqrt{\left \|  \Delta W \right \|^{2}} \\
				&\propto \frac{MN \alpha}{K^{2} - \frac{\sin(K(u_2-u_1)\pi/M)\sin(K(v_2-v_1)\pi/N)}{\sin((u_2-u_1)\pi/M)\sin((v_2-v_1)\pi/N)}}  \quad {\rm \slash \slash Equation~(\ref{eta})\ and\ Equation~(\ref{partial})}
			\end{aligned}
		\end{equation}
	\end{small}
	
\end{proof}

\section{Discussions about different factors that weakening high-frequency components}

\subsection{Effects of the network depth}\label{app_discussion_L}

If the decoder network is deep, then the decoder network is less likely to learn high-frequency components. It is because {\small$\lvert R_{uv}\rvert$} is relatively large for low-frequency components. In this way, the large effect of a single layer's {\small$T^{(l,uv)}$} of low-frequency components on {\small$\log\textit{SOM}(\boldsymbol{\mathbb{T}}^{(uv)(L:1)})$}, \emph{i.e.}, {\small$\log(\lvert\mu_lR_{uv}\rvert^2+K^2\sigma_l^2)$}, can be accumulated through different layers according to (1) the Law of Large Numbers, and (2) the independent effects {\small$\log(\lvert\mu_lR_{uv}\rvert^2+K^2\sigma_l^2)$} between different layers' {\small$T^{(l,uv)}$} on {\small$\log\textit{SOM}(\boldsymbol{\mathbb{T}}^{(uv)(L:1)})=\sum\nolimits_{l=1}^L \log(\lvert\mu_lR_{uv}\rvert^2+K^2\sigma_l^2)$}.

Therefore, the large {\small$\lvert R_{u^{\text{low}}v^{\text{low}}}\rvert$} value for a low-frequency component {\small$[u^{\text{low}},v^{\text{low}}]$} makes {\small$\boldsymbol{\mathbb{T}}^{(u^{\text{low}}v^{\text{low}})(L:1)}$} more likely to have a large norm, whereas the small {\small$\lvert R_{u^{\text{high}}v^{\text{high}}}\rvert$} value for a high-frequency component {\small$[u^{\text{high}},v^{\text{high}}]$} makes {\small$\boldsymbol{\mathbb{T}}^{(u^{\text{high}}v^{\text{high}})(L:1)}$} less likely to have a large norm. This indicates that \textbf{a deep decoder network will almost certainly strengthen the encoding of low-frequency components of the input sample, while weaken the encoding of high-frequency components.}

\subsection{Effects of the initialization of network parameters}\label{app_discussion_mu}

If the expectation {\small$\mu_l$} of convolutional weights in each $l$-th layer has a large absolute value {\small$\lvert\mu_l\rvert$}, then the decoder network is less likely to learn high-frequency components. It is because according to Theorem~\ref{th:TTT}, a large absolute value {\small$\lvert\mu_l\rvert$} boosts the imbalance effects {\small$\lvert\mu_lR_{uv}\rvert^2$} among different frequency components, thereby strengthening the trend of encoding low-frequency components of the input sample.

\subsection{Effects of the convolutional kernel size}\label{app_discussion_K}

If the convolutional kernel size $K$ is small, then the decoder network is less likely to learn high-frequency components. It is because according to Theorem \ref{th:TTT}, a large $K$ value alleviates imbalance of the second-order moment {\small$\textit{SOM}(\boldsymbol{\mathbb{T}}^{(uv)(L:1)})$} between low frequencies and high frequencies caused by the imbalance of {\small$\lvert R_{uv}\rvert$}. Thus, a small $K$ value strengthens the trend of encoding low-frequency components of the input sample.

\subsection{Effects of the distribution of the training data}\label{app_discussion_natural}
If the cascaded convolutional decoder network is trained on natural images, then the decoder network is less likely to learn high-frequency components. Previous studies \cite{ruderman1994statistics} have empirically found that natural images were dominated by low-frequency components. Specifically, frequency spectrums of natural images follow a Power-law distribution.
\emph{I.e.}, low-frequency components (\emph{e.g.}, the frequency component {\small$[u,v]$} closed to {\small$[0,0],[0,N-1],[M-1,0]$}, and {\small$[M-1,N-1]$}) have much larger length {\small$\lVert \textbf{g}^{(uv)}\rVert_2=\sqrt{\sum_c \vert G_{uv}^{(c)}\vert^2}$} than other frequency components. Besides, according to rules of the forward propagation in Equation~(\ref{eq:forward}) and the change of {\small$T^{(l,uv)}$} in Equation (\ref{eq:back1}), if the frequency component {\small$\textbf{g}^{(uv)}$} of the input image has a large magnitude, then {\small$\textbf{h}^{(uv)}$} of the output image also has a large magnitude. This means that using natural images as the input strengthens the trend of encoding low-frequency components.

\section{More experimental results}

\subsection{Verifying that a neural network usually learned low-frequent components first.}\label{app:sec:exp:lowfirst}
In section, we provide more experimental results to verify that a neural network usually learned low-frequent components first, which had already been shown in Figure~\ref{fig:bottleneck}(a) in the main paper. Here, we also constructed a cascaded convolutional auto-encoder by using the VGG-16 as the encoder network. The decoder network contained three upconvolutional layers for the CIFAR-10 dataset, and contained three upconvolutional layers for the Broden dataset. Each convolutional/upconvolutional layer in the auto-encoder applied zero-paddings and was followed by a batch normalization layer and an ReLU layer. The auto-encoder was trained using the mean squared error (MSE) loss for image reconstruction. Results in Figure~\ref{app:fig:bottleneck1} verified that the auto-encoder usually learned low-frequent components first and gradually learned higher frequecies. We also attached the generated image below its spectrum map in Figure~\ref{app:fig:bottleneck1_time}, in order to help people understand the learning process of the auto-encoder.

\begin{figure}[tbp]
	\vskip -0.1in
	\centering
	\includegraphics[width=0.98\linewidth]{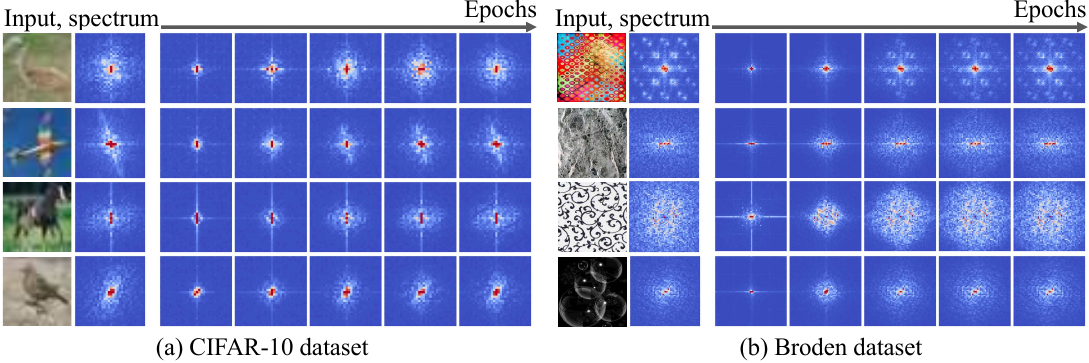}
	\caption{Magnitude maps of feature spectrums of 
		different epochs' network output. Each magnitude map was averaged over all channels. For clarity, we moved low frequencies to the center of the spectrum map, and moved high frequencies to corners of the spectrum map. Note that we set the magnitude of the fundamental frequency to be the same with the magnitude of the second significant frequency. For resutls in (b), we only visualized components in the center of the spectrum map with the range of relatively low frequencies {\small$u\in\{u|0 \le u < M/8\} \cup \{u|7M/8 \le u < M \}; v\in \{v|0 \le v < N/8  \} \cup \{ v|7N/8 \le v < N\}$} for clarity.}
	\label{app:fig:bottleneck1}
	\vskip -0.1in
\end{figure}

\begin{figure}[tbp]
	\vskip -0.1in
	\centering
	\includegraphics[width=\linewidth]{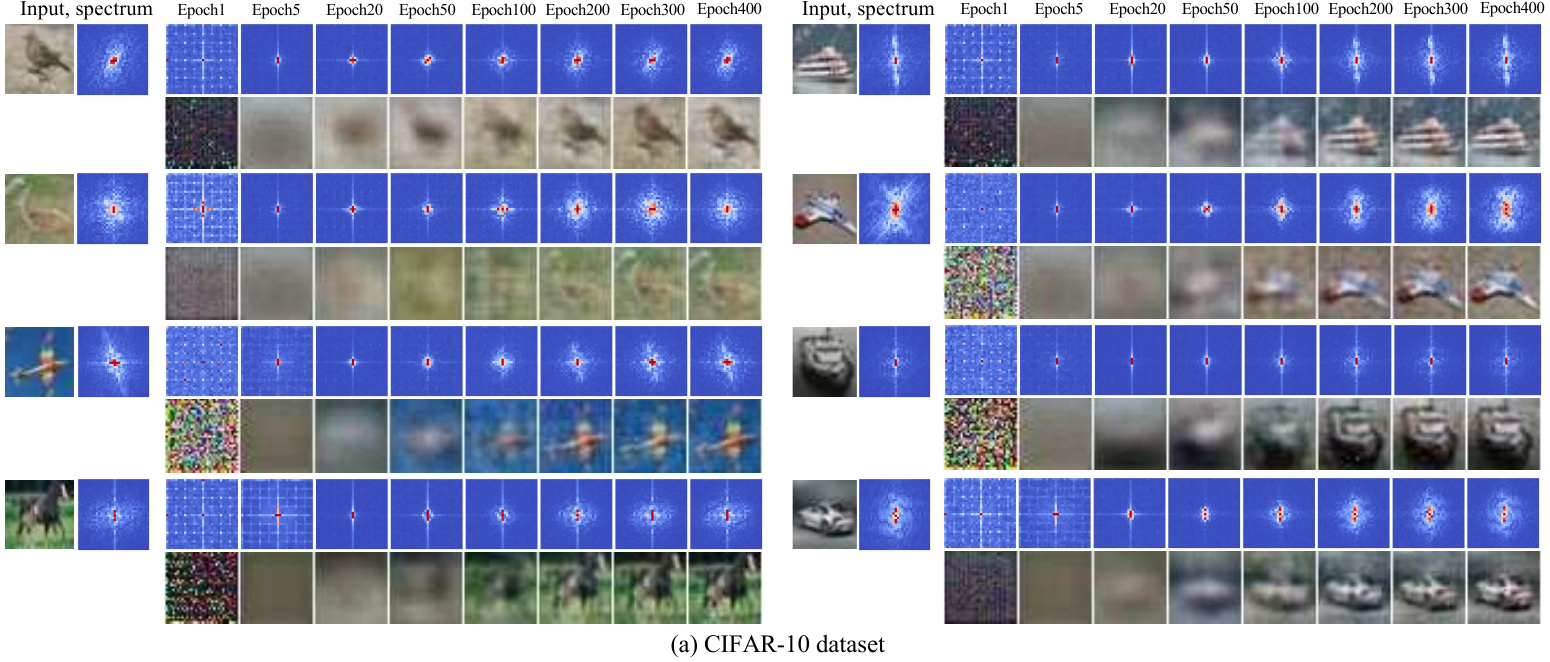}
	\includegraphics[width=\linewidth]{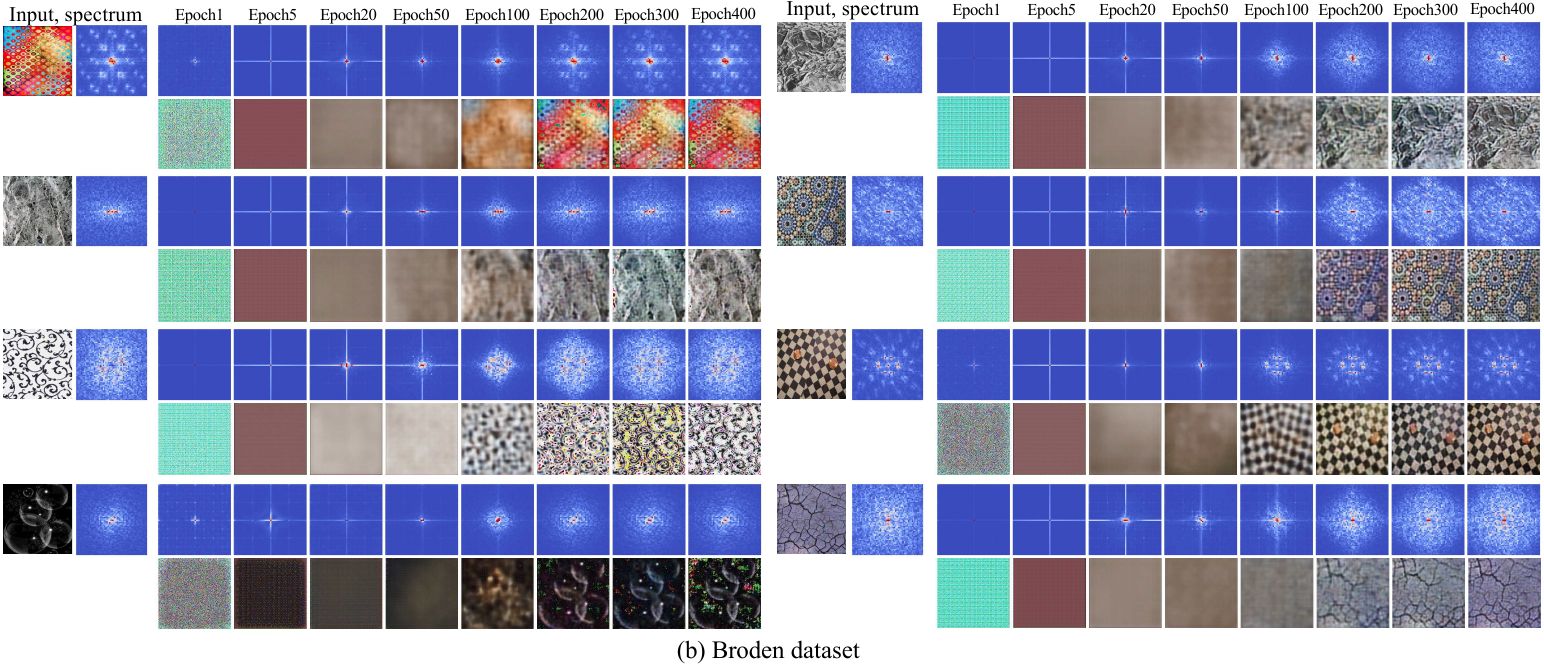}
	\caption{Magnitude maps of feature spectrums and the corresponding generated images of different epochs. Results show that in the very few epochs of the training, the network removed noisy signal caused by the upsampling, to some extent, which were in the grid pattern in the spectrum. After that, the network learned low-frequency components first, and then gradually learned higher frequencies. Each magnitude map in this figure was averaged over all channels. For clarity, we moved low frequencies to the center of the spectrum map, and moved high frequencies to corners of the spectrum map. Note that we set the magnitude of the fundamental frequency to be the same with the frequency that had the second large magnitude.}
	\label{app:fig:bottleneck1_time}
	\vskip -0.1in
\end{figure}

\begin{figure}[tbp]
	\vskip -0.1in
	\centering
	\includegraphics[width=0.98\linewidth]{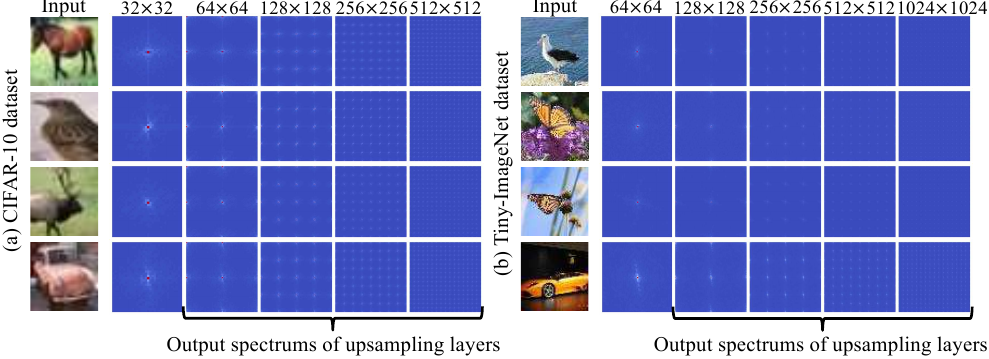}
	\caption{Magnitude maps of feature spectrums after one/two/there/four upsampling layers. Each magnitude map was averaged over all channels. For clarity, we moved low frequencies to the center of the spectrum map, and moved high frequencies to corners of the spectrum map.}
	\label{app:fig:upsampling}
	\vskip -0.1in
\end{figure}

\subsection{Verifying that the upsampling operation made a decoder network repeat strong signals at certain frequencies of the generated image.}\label{app:sec:exp:upsample}
In section, we provide more experimental results to verify that the upsampling operation in the decoder repeats strong frequency components of the input to generate spectrums of upper layers.

First, we conducted experiments to verify Theorem~\ref{th_upconv} in the main paper, which claims that the upsampling operation repeats the strong magnitude of the fundamental frequency {\small$G_{00}^{(c)}$} of the lower layer to different frequency components {\small$\forall c,H_{u^*v^*}^{(c)}$} of the higher layer, where {\small$u^*=0, M_0,2 M_0, 3M_0,\ldots;v^* =0, N_0,2 N_0,3 N_0,\ldots$}. To verify this, given an image, let the image pass through four cascaded upsampling layers. We visualized the feature spectrum generated by each upsampling layer, in order to verify whether the upsampling operation repeated the strong magnitude of the fundamental frequency of the input image to different frequency components of the feature spectrum generated by upsampling layers. Results on the CIFAR-10 dataset and the Tiny-ImageNet dataset in Figure~\ref{app:fig:upsampling} verified Theorem~\ref{th_upconv}.

Second, we provide more results on real neural networks, which have already been shown in Figure~\ref{fig:bottleneck}(b) in the main paper. We also constructed a cascaded convolutional auto-encoder by using the VGG-16 as the encoder network. The decoder network contained four upconvolutional layers. Each convolutional/upconvolutional layer in the auto-encoder applied zero-paddings and was followed by a batch normalization layer and an ReLU layer. The auto-encoder was trained on the Broden dataset using the mean squared error (MSE) loss for image reconstruction. Results in Figure~\ref{app:fig:upsampling2} verified Theorem~\ref{th_upconv}.

\begin{figure}[tbp]
	\vskip -0.1in
	\centering
	\includegraphics[width=0.7\linewidth]{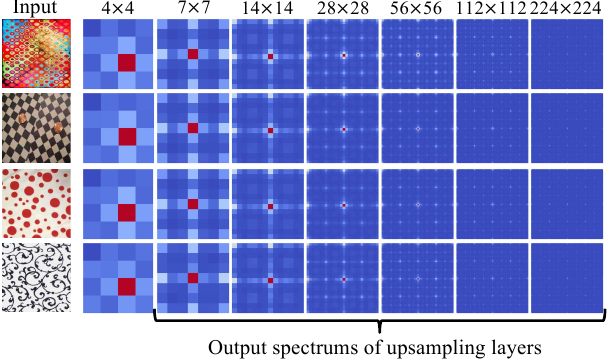}
	\caption{Magnitude maps of feature spectrums after one/two/there/four/five/six upsampling layers. Each magnitude map was averaged over all channels. For clarity, we moved low frequencies to the center of the spectrum map, and moved high frequencies to corners of the spectrum map.}
	\label{app:fig:upsampling2}
	\vskip -0.1in
\end{figure}
\begin{figure}[tbp]
	\vskip -0.1in
	\centering
	\includegraphics[width=0.95\linewidth]{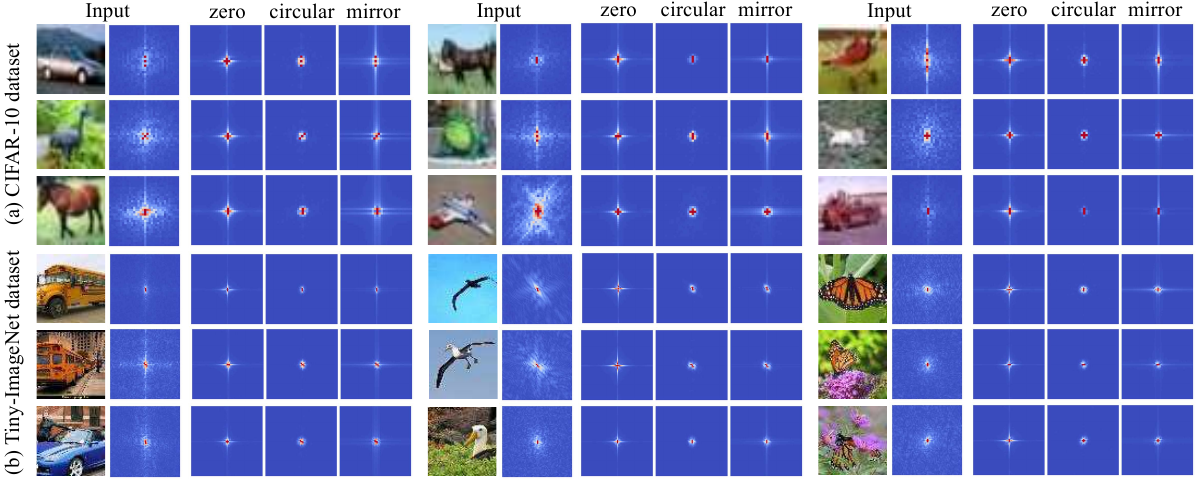}
	\caption{A network with zero-padding operations usually strengthened more low-frequency components than a network with circular padding operations and a network with mirror padding operations. Here, each magnitude map of the feature spectrum was averaged over all channels. For clarity, we move low frequencies to the center of the spectrum map, move high frequencies to corners of the spectrum map, and set the magnitude of the fundamental frequency to be the same with the frequency that has the second large magnitude.}
	\label{app:fig:zero}
	\vskip -0.1in
\end{figure}

\subsection{Verifying that the zero-padding operation strengthened the encoding of low-frequency components.}\label{app:sec:exp:zero}

In section, we provide more experimental results to verify that the zero-padding operation strengthened the encoding of low-frequency components, which had already been shown in Figure~\ref{fig:remark2_3}(c) in the main paper. Here, we also constructed the following three baseline networks. The first baseline network contained 5 convolutional layers, and each layer applied zero-paddings. Each convolutional layer contained 16 convolutional kernels (kernel size was 7$\times $7), except for the last layer containing 3 convolutional kernels. The second baseline network and the third baseline network were constructed by replacing all zero-padding operations with circular padding operations and replacing all zero-padding operations with mirror padding operations, respectively. Results in Figure~\ref{app:fig:zero} verified that the zero-padding operation strengthened the encoding of low-frequency components.

\subsection{Verifying that a deep network strengthened low-frequency components.}\label{app:sec:exp:factors:L}

In section, we provide more experimental results to verify that a deep network strengthened low-frequency components, which had already been shown in Figure~\ref{fig:remark2_3}(a) in the main paper. Here, we also constructed a network with 50 convolutional layers. Each convolutional layer applied zero-paddings to avoid changing the size of feature maps, and was followed by an ReLU layer. We visualized feature spectrums of different convolutional layers. Results on the CIFAR-10 dataset and the Tiny-ImageNet dataset in Figure~\ref{app:fig:depth} show that magnitudes of low-frequency components increased along with the network layer number.

\begin{figure}
	\vskip -0.1in
	\centering
	\includegraphics[width=0.7\linewidth]{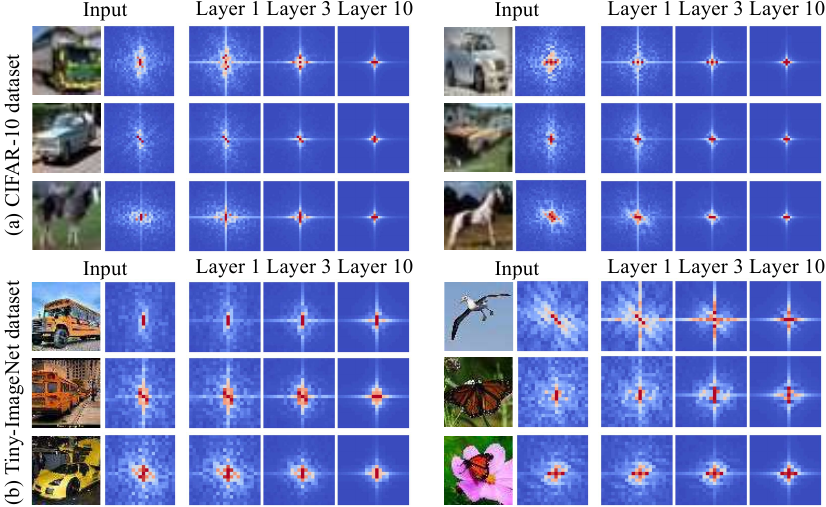}
	\caption{Comparing feature spectrums of different layers. Results show that higher layers of a network usually generated features with more low-frequency components. For clarity, we move low frequencies to the center of the spectrum map, move high frequencies to corners of the spectrum map, and set the magnitude of the fundamental frequency to be the same with the frequency that has the second large magnitude. For resutls in (b), we only visualized components in the center of the spectrum map with the range of relatively low frequencies {\small$u\in\{u|0 \le u < M/6\} \cup \{u|5M/6 \le u < M \}; v\in \{v|0 \le v < N/6  \} \cup \{ v|5N/6 \le v < N\}$} for clarity.}
	\label{app:fig:depth}
	\vskip -0.1in
\end{figure}

\subsection{Verifying that a larger absolute mean value {\small$\mu_l$} of each $l$-th layer's parameters strengthened low-frequency components.}\label{app:sec:exp:factors:mu}

In section, we provide more experimental results to verify that a larger absolute mean value {\small$\mu_l$} of each $l$-th layer's parameters strengthened low-frequency components, which had already been shown in Figure~\ref{fig:remark2_3}(b) in the main paper. Here, we also applied a network architecture with 5 convolutional layers. Each layer contained 16 convolutional kernels (kernel size was 9$\times $9), except for the last layer containing 3 convolutional kernels. Based on this architecture, we constructed three networks, whose parameters were sampled from Gaussian distributions {\small$\mathcal{N}(\mu=0,\sigma^2=0.01^2)$}, {\small$\mathcal{N}(\mu=0.001,\sigma^2=0.01^2)$}, and {\small$\mathcal{N}(\mu=0.01,\sigma^2=0.01^2)$}, respectively. Results on the CIFAR-10 dataset and the Tiny-ImageNet dataset in Figure~\ref{app:fig:mu} show that magnitudes of low-frequency components increased along with the absolute mean value of parameters.

\begin{figure}
	\vskip -0.1in
	\centering
	\includegraphics[width=0.7\linewidth]{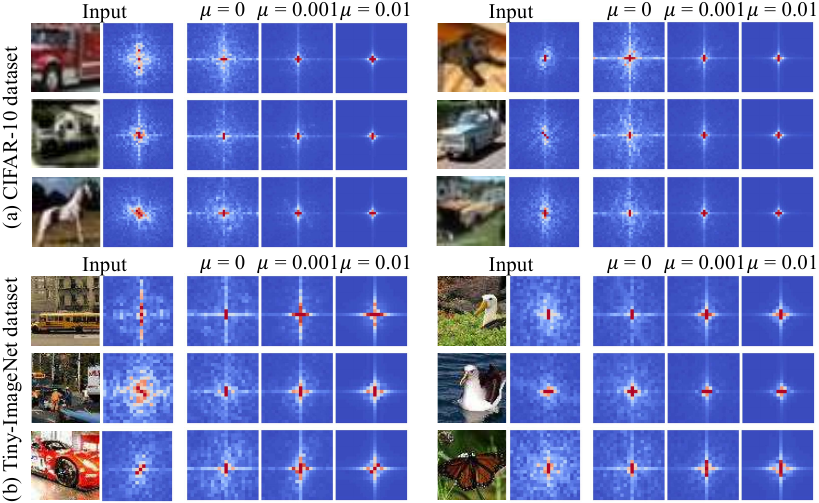}
	\caption{A network whose convolutional weights had a mean value significantly biased from 0 usually strengthened low-frequency components, but weakened high-frequency components. Here, each magnitude map of the feature spectrum was averaged over all channels. For clarity, we moved low frequencies to the center of the spectrum map, moved high frequencies to corners of the spectrum map, and set the magnitude of the fundamental frequency to be the same with the frequency that has the second large magnitude. For resutls in (b), we only visualized components in the center of the spectrum map with the range of relatively low frequencies {\small$u\in\{u|0 \le u < M/6\} \cup \{u|5M/6 \le u < M \}; v\in \{v|0 \le v < N/6  \} \cup \{ v|5N/6 \le v < N\}$} for clarity.}
	\label{app:fig:mu}
	\vskip -0.1in
\end{figure}

\subsection{Discussions on the curse of dimension}\label{app:sec:dimension}

In this section, we discuss the problem of the curse of dimension, when we compute the cosine similarity of two high-dimensional vectors with as many as {\small$32^2$}, {\small$64^2$} or {\small$224^2$} dimensions. In general, for each pair of extremely high-dimensional vectors, it's hard for these vectors to have very high cosine similarity. It is because even if the noisy differences between many pairs of dimensions can be ignored, in the process of calculating the sum of squares, these noisy differences will accumulate. Therefore, the cosine similarity of two extremely high-dimensional vectors will not be particularly large.

\subsection{Verifying that Assumption 3.1 can be applied to fully trained DNNs}\label{app:sec:ass}

Assumption~\ref{ass:T} shows that in early training of a DNN, all elements in $T^{(l,uv)}$ are irrelevant to each other, and $\forall l\ne l'$, elements in $T^{(l,uv)}$ and $T^{(l',uv)}$ are irrelevant to each other.
In this section, we further conducted experiments to verify that such irrelevant relationships also existed in a fully trained DNN. To this end, we constructed a cascaded convolutional auto-encoder by using the VGG-16 as the encoder network. The decoder network contained ten convolutional layers, where the 1st, 3rd, 5th, 7th, and 9th layers were traditional convolutional layers, and the 2nd, 4th, 6th, 8th, and 10th layers were upconvolutional layers. Each convolutional/upconvolutional layer applied zero-paddings and was followed by a batch normalization layer and an ReLU layer. The network was trained on the Tiny-ImageNet dataset using the mean squared error (MSE) loss for image reconstruction.

Let the above auto-encoder be trained to convergence. Then, we computed the Pearson's correlation coefficient between each random pair of variables $\lvert \Delta T^{(l,uv)}_{d_1c_1} \rvert$ and $\lvert \Delta T^{(l,uv)}_{d_2c_2}\rvert$ through different images, denoted by $r(\lvert \Delta T^{(l,uv)}_{d_1c_1} \rvert,\lvert \Delta T^{(l,uv)}_{d_2c_2} \rvert)$, to measure the relevance between two elements $T^{(l,uv)}_{d_1c_1}$ and $T^{(l,uv)}_{d_2c_2}$ in $T^{(l,uv)}$. Here, $\Delta T^{(l,uv)}$ denoted the change of $T^{(l,uv)}$ when we updated parameters $\textbf{W}$ for a single gradient-descent step on a single input sample. Results in Table~\ref{app:tab:ass} show that even when the network was fully trained, different elements in $T^{(l,uv)}$ had low Pearson's correlation coefficient $r$. This proved that our assumption that all elements in $T^{(l,uv)}$ were irrelevant to each other was reasonable. The last three convolutional layers in the decoder showed a larger Pearson's correlation coefficient, because network parameters close to the output layer had been converged to the principle feature direction of each category. Nevertheless, our experiments showed that for most layers, we could keep Assumption~\ref{ass:T}, which enabled to us to prove that convolution operations in these layers weakened high-frequency components. 
\begin{table}[tbp]
	\caption{Pearson's correlation coefficient between each random pair of variables $\lvert \Delta T^{(l,uv)}_{d_1c_1} \rvert$ and $\lvert \Delta T^{(l,uv)}_{d_2c_2}\rvert$ through different images.}
	\label{app:tab:ass}
	\centering
	\resizebox{0.7\linewidth}{!}{
		\begin{tabular}{c|c}
			\toprule
			Depth of the decoder network&  $\mathbb{E}_{u,v,d_1,c_1,d_2,c_2}[r(\lvert \Delta T^{(l,uv)}\_{d_1c_1} \rvert,\lvert \Delta T^{(l,uv)}\_{d_2c_2} \rvert)]$\\
			\midrule
			1 & 0.011  \\
			2 & -0.002 \\
			3 & 0.018 \\
			4 & -0.001 \\
			5 & -0.013 \\
			6 & 0.026 \\
			7 & 0.018 \\
			\bottomrule
	\end{tabular}}
\end{table}

\subsection{Effects of large absolute mean values on layers with different depth}\label{app:sec:large_mu_depth}

We conducted experiments to measure the effects of large absolute mean values on layers with different depth. To this end, we compared spectrums of output features, when we set large absolute mean values for parameters in different convolutional layers. Therefore, we constructed five convolutional networks with the same architecture for comparison. Each convolutional network had five convolutional layers. To construct the $l$-th network for comparison, we sampled parameters of the $l$-th convolutional layer from the Gaussian distribution $\mathcal{N}(\mu=0.1,\sigma^2=0.1^2)$, and sampled parameters of the remaining four convolutional layers from the Gaussian distribution $\mathcal{N}(\mu=0,\sigma^2=0.1^2)$. Besides, based on this architecture, we also constructed the sixth network by let parameters of all layers be sampled from the Gaussian distribution $\mathcal{N}(\mu=0,\sigma^2=0.1^2)$.

Figure~\ref{app:fig:large_mu_depth} shows results on the Broden dataset. Compared with the sixth network that all parameters had zero mean (see Figure~\ref{app:fig:large_mu_depth}(a)), all other networks (see Figure~\ref{app:fig:large_mu_depth}(b1-b5)), whose parameters in a certain layer had a large absolute mean value, weakened high-frequency components. Besides, no matter which layer had parameters of a large absolute mean value, there was no significant difference between the five networks in weakening the encoding of high-frequency components.

\begin{figure}
	\centering
	\includegraphics[width=0.7\linewidth]{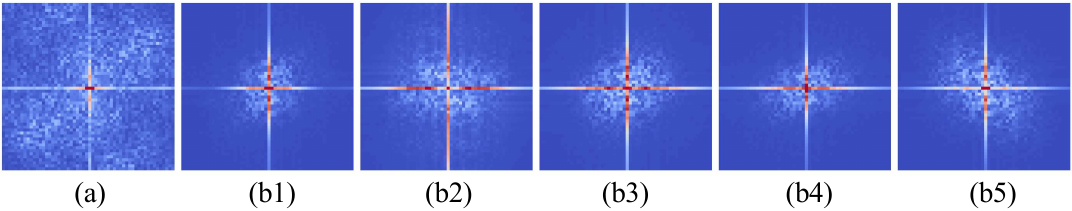}
	\caption{(a) The spectrum generated by the sixth decoder, whose parameters in all layers had zero mean.
(b1-b5) Spectrums generated by ﻿the first five decoders, whose parameters in a certain layer (the 1st, 2nd, 3rd, 4th, and 5th, respectively) had a large absolute mean value. Results show that no matter which layer of the network had a larger absolute mean value of parameters, there was no significant difference in weakening the encoding of high-frequency components.}
	\label{app:fig:large_mu_depth}
	\vskip -0.1in
\end{figure}

\subsection{Details about the frequency shift}\label{app:sec:shift}

In this section, we provide examples to introduce details about how to shift each salient frequency component {\small$[u,v]$} in the input {\small$x$} to {\small$[u+\Delta u,v]$} or {\small$[u-\Delta u,v]$} towards higher frequencies, as Figure~\ref{app:fig:shift} shows. Note that we move low frequencies to the center of the spectrum map, and move high frequencies to the corners of the spectrum map for clarity.

\begin{figure}
	\centering
	\includegraphics[width=0.7\linewidth]{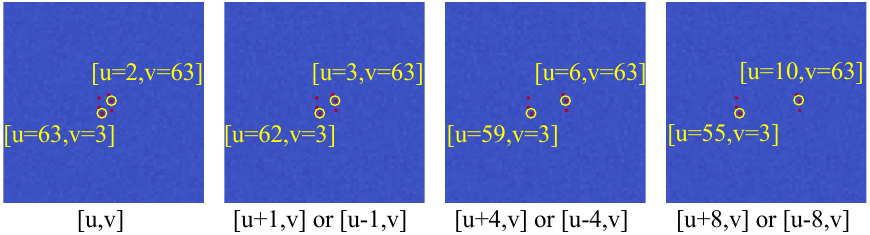}
	\caption{Examples of shifting each salient frequency component {\small$[u,v]$} in the input to {\small$[u+\Delta u,v]$} or {\small$[u-\Delta u,v]$} towards higher frequencies, where {\small$\Delta u = 1,4,8$}. For clarity, we move low frequencies to the center of the spectrum map, and move high frequencies to the corners of the spectrum map.}
	\label{app:fig:shift}
	\vskip -0.1in
\end{figure}

\subsection{More experimental details}\label{app:sec:details}
Table~\ref{app:tab:details} reports the number of epochs for the training of each model and its fitting error $\mathbb{E}_x[\frac{\lVert x-\hat{x} \rVert_2^2}{N}]$, where $N$ denoted the number of pixels in the image.

\begin{table}[htbp]
	\caption{Number of epochs for the training of each model and its fitting error.}
	\label{app:tab:details}
	\centering
	\resizebox{\linewidth}{!}{
		\begin{tabular}{l|cc}
			\toprule
			& \# training epoch & fitting error \\
			\midrule
			The model used in verifying that a neural network usually learned low frequent components first &400  & 9.35e-3  \\
			The model used in verifying the repeat of certain frequencies. & 10 & 6.97e-2 \\
			\bottomrule
	\end{tabular}}
\end{table}

\end{document}